\documentclass[letterpaper, 10 pt, conference]{ieeeconf}  

\IEEEoverridecommandlockouts                              

\usepackage{graphics} 
\usepackage{epsfig} 
\usepackage{mathptmx} 
\usepackage{times} 
\usepackage{amsmath} 
\usepackage{amssymb}  

\usepackage{amsthm}
\newcommand{\minus}{\scalebox{0.8}[1.0]{$-$}}
\newcommand{\plus}{\scalebox{0.8}[1.0]{$+$}}
\newcommand{\equals}{\scalebox{0.8}[1.0]{$=$}}

\usepackage{bm}
\usepackage{cite}
\usepackage{xcolor,graphicx}
\usepackage{algorithm}
\usepackage{algpseudocode}
\usepackage{algorithmicx}
\usepackage{subcaption}
\usepackage{overpic}
\usepackage{booktabs}
\usepackage[english]{babel}
\usepackage{url}
\usepackage{placeins}  
\usepackage{afterpage}  
\usepackage{booktabs}  
\usepackage{multirow}  

\newtheorem{proposition}{Proposition}

\usepackage[capitalize,nameinlink]{cleveref}

\crefname{equation}{Eq.}{Eqs.}
\Crefname{equation}{Eq.}{Eqs.}
\usepackage{tikz}
\usepackage{xcolor}

\usepackage{etoolbox}

\usepackage{paralist}

\usepackage{tikz}
\newcommand\copyrightnotice[1]{
    \begin{tikzpicture}[remember picture,overlay]
    \node[anchor=north,xshift=125,yshift=-12pt] at (current page.north) {\parbox{\dimexpr0.45\textwidth-\fboxsep-\fboxrule\relax}{\scriptsize #1}};
    \end{tikzpicture}
}

\AtBeginEnvironment{algorithm}{%
  \setlength{\textfloatsep}{5pt plus 1pt minus 2pt}
  \setlength{\intextsep}{5pt plus 1pt minus 2pt}
}

\title{\LARGE \bf
Self-supervised Physics-Informed Manipulation of Deformable Linear Objects with Non-negligible Dynamics
}

\author{Youyuan Long$^{1, 2}$, Gokhan Solak$^{1}$, Sara Zeynalpour$^{1}$, Heng Zhang$^{1, 2}$, and Arash Ajoudani$^{1}$   
\thanks{This work was supported by the Horizon Europe Project TORNADO (GA 101189557).
}
\thanks{$^{1}$Human-Robot Interfaces and Interaction Lab, Istituto Italiano di Tecnologia, Genova, Italy
        {(e-mail: youyuan.long@iit.it; gokhan.solak@iit.it; sara.zeynalpour@iit.it; heng.zhang@iit.it; arash.ajoudani@iit.it).}}%
\thanks{$^{2}$Ph.D. program of national interest in Robotics and Intelligent Machines (DRIM) and Universita di Genova, Genoa, Italy.}%
}

\begin{document}

\maketitle
\thispagestyle{empty}
\pagestyle{empty}

\begin{abstract}
We address dynamic manipulation of deformable linear objects by presenting \textit{SPiD}, a physics-informed self-supervised learning framework that couples an accurate deformable object model with an augmented self-supervised training strategy.
On the modeling side, we extend a mass–spring model to more accurately capture object dynamics while remaining lightweight enough for high-throughput rollouts during self-supervised learning. 
On the learning side, we train a neural controller using a task-oriented cost, enabling end-to-end optimization through interaction with the differentiable object model. In addition, we propose a self-supervised DAgger variant that detects distribution shift during deployment and performs offline self-correction to further enhance robustness without expert supervision.
We evaluate our method primarily on the rope stabilization task, where a robot must bring a swinging rope to rest as quickly and smoothly as possible. 
Extensive experiments in both simulation and the real world demonstrate that the proposed controller achieves fast and smooth rope stabilization, generalizing across unseen initial states, rope lengths, masses, non-uniform mass distributions, and external disturbances. 
Additionally, we develop an affordable markerless rope perception method, and demonstrate that our controller maintains performance with noisy and low-frequency state updates.
Furthermore, we demonstrate the generality of the framework by extending it to the rope trajectory tracking task. 
Overall, SPiD offers a data-efficient, robust, and physically grounded framework for dynamic manipulation of deformable linear objects, featuring strong sim-to-real generalization.

\end{abstract}

\copyrightnotice{\copyright 2026 IEEE. 
This work has been submitted to the IEEE for possible publication.
Copyright may be transferred without notice, after which this version may no longer be accessible.}
\section{INTRODUCTION}

With the 
advances in robot autonomy and hardware,
robots are now being deployed in human-centered environments such as homes, shopping malls, and hospitals. These environments contain a wide variety of deformable objects, such as ropes, tubes, clothes, paper, and sponges~\cite{yin2021modeling, chi2024iterative}. The ability to manipulate such deformable objects is therefore of great practical importance for service and domestic robots. However, tasks that are trivial for humans remain highly challenging for robots. The difficulty mainly arises from the following factors.

   First of all, unlike rigid bodies, deformable objects exhibit infinite-dimensional degrees of freedom and can easily deform under even small external forces or disturbances. Moreover, their deformation behavior is highly nonlinear and often involves coupled physical effects such as tension, bending, and twisting~\cite{yin2021modeling}. Consequently, accurately modeling their dynamics is significantly more complex than that of rigid-body systems.

    Furthermore, in contrast to quasi-static manipulation, dynamic manipulation (high-velocity actions) refers to control and interaction scenarios in which inertial and time-varying effects play a dominant role in the object’s motion~\cite{mason1993dynamic}, often resulting in fast and pronounced deformations for deformable objects.
    Such manipulation is highly challenging due to the complex dynamics of deformable objects, strict control latency requirements, and a significant sim-to-real gap.
    As a result, many traditional optimization-based approaches struggle to achieve real-time performance and stability~\cite{yin2021modeling}.

    Due to these challenges, many researchers have turned to data-driven approaches for deformable object manipulation, such as learning dynamic models from visual data \cite{zhang2025particle,shen2022acid} or training controllers from human demonstrations \cite{salhotra2022learning}. However, these methods typically require large amounts of high-quality data and often suffer from limited generalization capability.

To address these challenges, we propose the \textbf{SPiD} (\textbf{S}elf-supervised \textbf{P}hysics-\textbf{i}nformed learning for \textbf{D}eformable object manipulation) framework, which consists of two main components: physics-informed modeling and learning-based control.
As an initial and principled instantiation of deformable object manipulation, we focus on rope manipulation, which captures many of the core difficulties of deformable interaction, such as high-dimensional continuous state spaces, nonlinear dynamics, and strong coupling between contact forces and global object configuration.
In SPiD, \textit{a differentiable rope dynamics model} lies at the core of the framework. 
The model parameters are identified using differentiable physics–based system identification (Sec. \ref{sec:identification}).
This model is computationally efficient yet remains highly nonlinear and high-dimensional (in our simulation, the rope consists of 21 mass points, resulting in a 126-dimensional state space).
Under such conditions, traditional model-based optimization methods, such as MPC, are prone to local optima and difficult to deploy in real time due to their iterative computation and nonconvex dynamics \cite{rawlings2020model, mayne2000nonlinear}.
Instead, we leverage the differentiable physics model to \textit{train a neural controller offline through self-supervision}, shifting the heavy computation to the training phase while enabling fast online control execution via efficient neural network matrix operations.
This framework highlights two key features:

\begin{figure*}[t]
    \centering
    \includegraphics[width=1.0\textwidth]{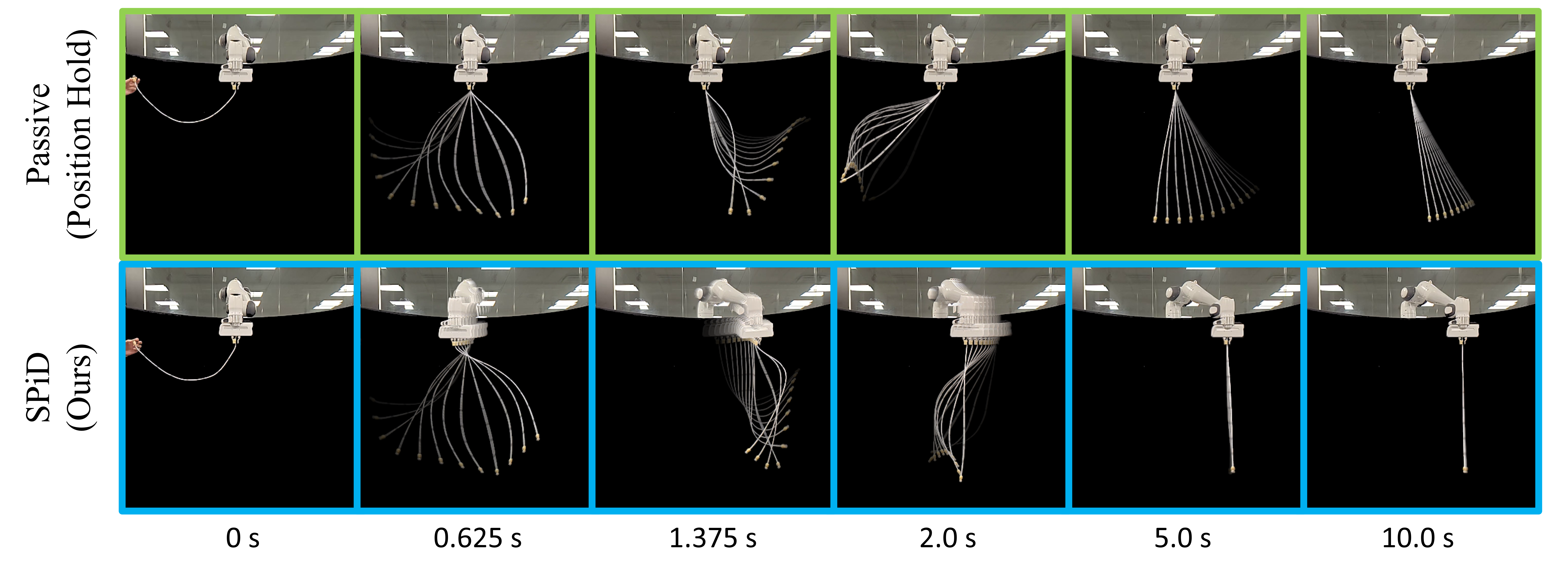}
    \vspace{-0.7cm}
    \caption{\textbf{Dynamic manipulation for rope stabilization.} 
Comparison between our controller and the passive case (where the robot arm holds its position). 
Each subfigure shows the rope motion at the current time, with 12 ghosted frames traced backward at 37.5~ms intervals—the fainter the trace, the earlier the time. 
Our controller actively computes actions that minimize the rope’s energy in real time, achieving superior performance, whereas the passive case relies solely on air resistance to dissipate energy. 
}
    \label{fig:time-lapse-real}
    \vspace{-0.4cm}
\end{figure*}

\begin{itemize}
    \item \textbf{Dynamic physical modeling:} 
    To capture the dynamic characteristics of rope motion, we extend the conventional mass–spring model by introducing additional damping terms and deriving their numerically stable analytical expressions.
    Our experiments (Sec.~\ref{sec:exp-model-val}) demonstrate that our model more accurately captures the rope’s dynamic behavior than the previous approach in~\cite{lv2017physically}.

    \item \textbf{Augmented self-supervised learning strategy:} 
    By designing a task-oriented loss function and allowing the neural controller to interact with the deformable linear object (DLO) model, we train a control policy by minimizing this loss through gradient-based optimization.
    This differentiable-physics approach eliminates the need for expert demonstrations and shifts the major computational burden to the offline training phase.
    To mitigate the sim-to-real gap, we inject noise into the identified model parameters, enabling the controller to achieve robust performance across objects with different physical properties.
    Furthermore, we propose a \textit{self-supervised DAgger} mechanism (Sec.~\ref{our controller}) to address the distribution-shift problem in the self-supervised setting.
    This mechanism enables the controller to continually refine its policy under real-world feedback without relying on expert demonstrations.
\end{itemize}

To validate our approach, we focus on the rope stabilization task, which involves controlling one end of the rope to bring an initially swinging rope to rest as quickly as possible. This task is highly dynamic and practically significant in various applications, such as industrial hoisting operations, aerial hoist rescue, and UAV payload delivery.
However, it remains an underexplored problem. Most existing studies have focused on crane systems \cite{chen2017swing,zhao2024robust,liu2021collaborative}, cable-suspended UAV systems \cite{palunko2012trajectory,wu2024learning,omar2022integrating}, or elevator rope sway control~\cite{benosman2014lyapunov}.
In these works, the rope is often simplified as a single- or double-pendulum model with restricted swing angles, and in some cases, the rope motion is further constrained to a two-dimensional plane. Although such simplifications may be reasonable for specific scenarios, they significantly weaken the inherent three-dimensional flexibility and dynamic complexity of the rope as a deformable body.
In contrast, in our task, the rope is initialized at a large swing angle, exhibiting fast and rich dynamic behaviors during its return phase, and the entire process occurs in three-dimensional space (Fig.~\ref{fig:time-lapse-real}).

Our framework is validated in four perspectives: 
\begin{inparaenum}[1)]
    \item \textit{Physics modeling} contribution is evaluated in comparison to the closest modeling baseline \cite{lv2017physically}, demonstrating that it significantly improves the accuracy of long horizon prediction of rope motion.
    \item \textit{Learning and control} aspect is evaluated through the rope stabilization experiments, achieving agile manipulation with superior performance in comparison to the baseline \cite{chen2017swing}.
    \item \textit{Robustness} aspect is validated through rope stabilization using a single-camera (markerless) perception system. For this purpose, we developed a data-driven markerless perception method and analyzed its limitations by comparison with a high-precision, marker-based motion capture system, as a secondary contribution.
    SPiD has shown robustness by achieving this task using low-frequency and noisy perception.
    \item \textit{Generalization} aspect is validated by applying SPiD on the rope trajectory tracking task that has different cost function and goal. SPiD successfully generalized to this task, achieving high-speed trajectory following in simulation.
\end{inparaenum}

In summary, our main contributions are as follows:
\begin{itemize}
\item SPiD, the novel physics-informed self-supervised learning framework that consists of a physically accurate DLO model and the augmented self-supervised training strategy incorporating a DAgger variant. The framework is efficient, robust, and generalizable to new tasks and objects as demonstrated by the experiments.
\item Extension of the mass-spring-based deformable object modeling by deriving damping terms that are proven valuable in dynamic motion prediction.
\item Achieving the rope stabilization task by energy-based task formulation without expert demonstrations, with high data-efficiency, and robustness to perturbations and noisy observations. To demonstrate the generalization capability of our framework, we also address a dynamic rope endpoint trajectory tracking task.
\item A data-driven markerless rope perception method for affordable and portable rope state estimation. Despite using low-cost hardware, the method achieves sufficient precision and update rate for real-time rope stabilization.
\end{itemize}


\section{Related work}
This section reviews relevant work on modeling, manipulation and perception of deformable objects.
\subsection{Physical modeling}
Previous studies on deformable object modeling have mainly relied on several mainstream physical formulations, including mass–spring systems (MSSs)~\cite{haumann1988behavioral,lv2017physically,zhong2024reconstruction}, position-based dynamics (PBD)~\cite{macklin2014unified,guler2015estimating}, finite element methods (FEM) \cite{greco2013b}, and elastic rod models \cite{bergou2008discrete}.
The MSSs has been widely adopted due to its high computational efficiency, but its accuracy is limited because of model simplifications \cite{lv2020review}.
PBD also provides fast simulation and good numerical stability, but focuses mainly on visual fidelity only~\cite{yin2021modeling}.
FEM achieves high precision in modeling material deformation but requires significant computational resources, making it unsuitable for real-time applications.
Elastic rod models are specifically designed for deformable linear object (DLO) modeling and provide a physically accurate description, but they similarly incur high computational costs.
In this work, we adopt a mass–spring formulation augmented with derived damping terms to retain computational efficiency while capturing key dynamic effects.

\subsection{Data-driven modeling}
Data-driven approaches aim to learn deformable object dynamics directly from sensory observations, often via neural transition models over particles, keypoints, or graphs.
Beyond earlier purely data-driven pipelines such as ACID~\cite{shen2022acid} and local-graph GNN models for deformable rearrangement~\cite{deng2023learning}, recent work has pushed toward higher-fidelity and more generalizable neural dynamics.
For instance, Particle-Grid Neural Dynamics~\cite{zhang2025particle} leverages particle--grid representations to better capture complex deformation and contact; AdaptiGraph~\cite{zhang2024adaptigraph} learns graph-based neural dynamics conditioned on continuous physical property variables and adapts to unseen deformable objects via few-shot interaction.
In addition, DeformNet~\cite{li2024deformnet} models deformable objects using PointNet with a recurrent state-space model to predict latent dynamics, which supports generalization to diverse simulated and real-world manipulation tasks.

These methods model complex nonlinear behaviors that are difficult to capture analytically. 
However, their performance heavily relies on the quantity and quality of training data, making them expensive to train in terms of both data and computation. 
In contrast, our approach leverages a structured physical model with high data efficiency (using only 2,000 sampled data points in real-world experiments, as shown in Sec.~\ref{sec:exp-model-val}) and preserves physical interpretability.

\subsection{Quasi-static manipulation}
Quasi-static manipulation assumes that deformable objects remain near force equilibrium during slow interactions, allowing inertial and dynamic effects to be neglected.
This assumption has enabled a wide range of successful manipulation strategies by simplifying the underlying dynamics and reasoning primarily over object geometry.
Representative works include global rope shape control~\cite{yu2022global}, rope knotting~\cite{sundaresan2020learning}, and cloth folding and smoothing~\cite{hoque2020visuospatial}.

Earlier approaches often relied on demonstration transfer or hand-designed geometric heuristics to generalize manipulation strategies to new configurations~\cite{schulman2016learning}.
More recent works incorporate visual feedback, latent shape representations, or learning-based components to improve robustness and generalization under partial observability~\cite{lippi2020latent,yin2021modeling}.
These methods have demonstrated strong performance in tasks where object motion is slow and deformation evolves gradually.
However, despite their effectiveness, quasi-static approaches infer the effects of actions primarily from object geometry, which often leads to slow and inefficient execution due to the need for iterative correction~\cite{chi2024iterative}.


\subsection{Dynamic manipulation}
Beyond quasi-static assumptions, dynamic manipulation explicitly considers inertial and transient effects during deformable object interaction, enabling faster and more agile behaviors.
However, modeling and controlling deformable objects in dynamic regimes remains challenging due to complex, high-dimensional dynamics and limited observability.

Recent work has explored learning-based approaches to dynamic deformable manipulation.
Using state diffusion and inverse dynamics models, Chen et al.~\cite{chen2025learning} train policies capable of performing highly dynamic tasks such as laundry cleanup.
Similarly, Lan et al.~\cite{lan2025dynamic} propose the Dynamics-Informed Diffusion Policy, which integrates imitation pretraining with physics-informed test-time adaptation, and apply it to goal-conditioned dynamic rope manipulation.
While these methods demonstrate strong performance, they rely on a large amount of expert demonstrations, which limits scalability to new tasks and objects.
To reduce dependence on expert data, Lim et al.~\cite{lim2022real2sim2real} propose collecting large-scale simulated trajectories using a simulator calibrated with limited real-world observations.
However, this approach requires re-identification of physical parameters whenever the manipulated object changes, hindering robustness across objects with varying properties.
In contrast, our method achieves data-efficient dynamic manipulation without expert demonstrations and generalizes robustly to deformable objects with diverse physical parameters.

\subsection{Rope perception} 
Despite extensive research on deformable object manipulation, robust rope perception in realistic and dynamic environments remains an open problem \cite{nair2017combining}. Many existing methods rely on marker-based perception systems or assume static rope configurations and simplified geometric models, enabling accurate state estimation only under controlled conditions. Such approaches are difficult to deploy in unstructured settings. Vision-based methods also struggle with thin and elongated structures such as ropes due to frequent self-occlusions, rapidly changing shapes, and sensitivity to background clutter and illumination \cite{schulman2013tracking}. Recently, large pre-trained foundation models for segmentation have emerged as a promising alternative \cite{Kirillov_2023_ICCV}; however, their performance degrades for highly deformable objects without task-specific fine-tuning, and their computational cost limits real-time applicability. Motivated by these limitations, we collect a task-specific dataset and fine-tune a lightweight, unified detection and segmentation model that enables accurate, real-time, markerless rope perception under dynamic conditions.

\section{SPiD: Self-supervised Physics-informed Deformable Object Manipulation}

The SPiD framework consists of an offline phase and an online phase as illustrated in Fig.~\ref{fig:framework}.
The offline phase includes deformable linear object (DLO) modeling and system identification, as well as a self-supervised control learning module.
The online phase deploys the learned neural policy for real-time control, while continuously performing out-of-distribution (OOD) detection, and uses a self-supervised DAgger mechanism to return to the control learning module to refine the controller and achieve more robust performance.

\begin{figure}[!h]
    \centering
    \includegraphics[width=0.95\linewidth]{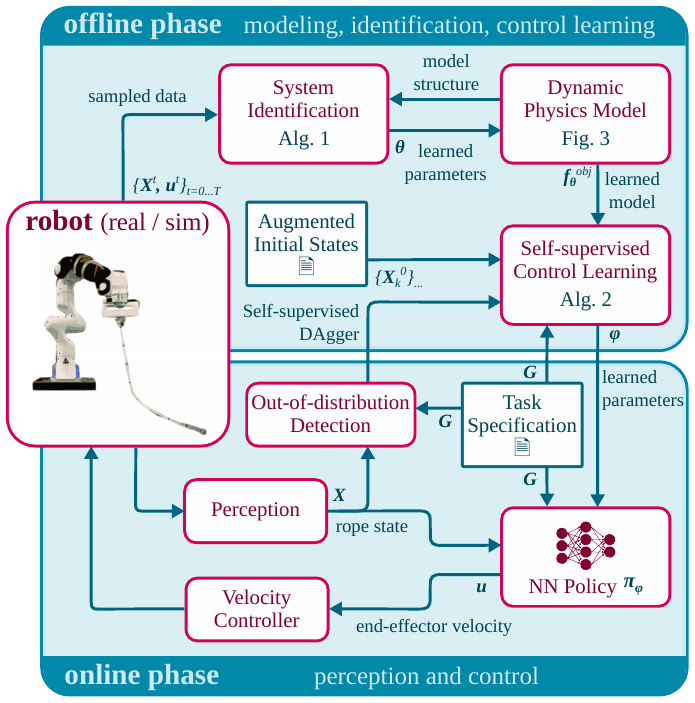}
    \caption{\textbf{The data flow of the SPiD framework}.}
    \label{fig:framework}
    \vspace{-0.2cm}
\end{figure}

In this section, we present the SPiD framework in three stages.
First, we describe how we model a DLO in Sec.~\ref{sec:modeling} and how its parameters are identified in Sec.~\ref{sec:identification}.
Then, in Sec.~\ref{sec:nn-control}, we present the augmented self-supervised control learning algorithm together with the self-supervised DAgger mechanism.

\subsection{Modeling} \label{sec:modeling}

As shown in Fig.~\ref{fig:mass-spring-model}, a DLO is modeled as a series of mass points connected by spring–damper elements. Each component—mass point, spring, and damper—is assigned independent parameters, providing the model with sufficient representational capacity to capture a wide range of physical properties, including those of heterogeneous ropes.

\begin{figure}[t]
    \centering
    \includegraphics[width=0.81\linewidth]{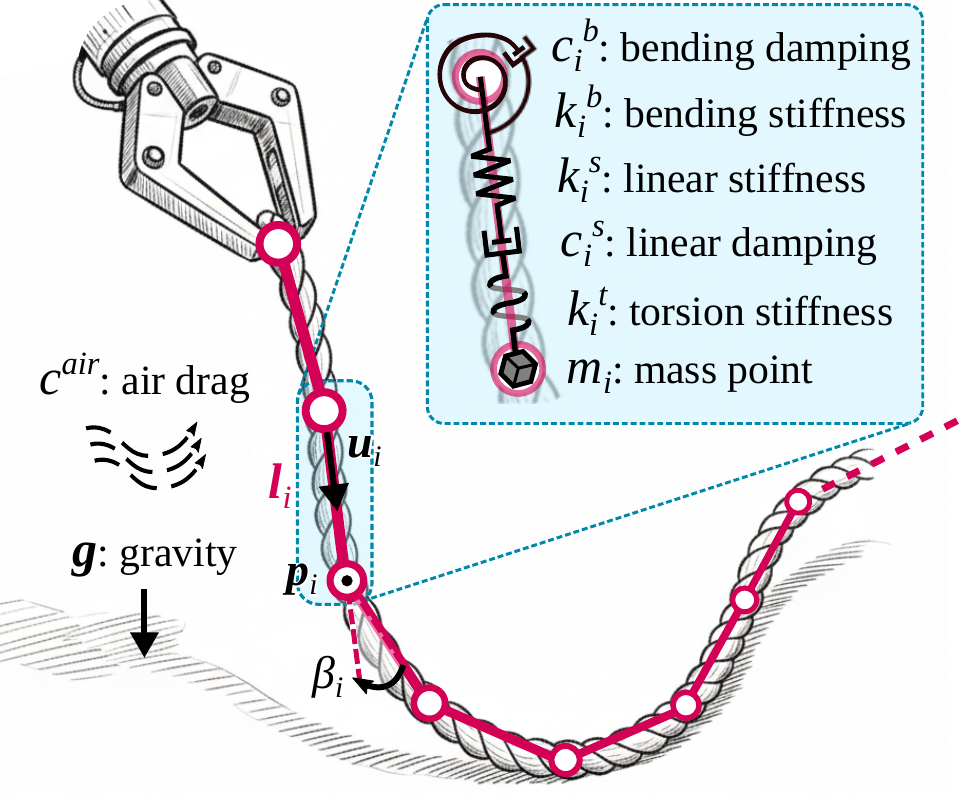}
    \vspace{-0.1cm}
    \caption{\textbf{The DLO model.} The model consists of $N{+}1$ mass points connected by linear springs/dampers, bending springs/dampers, and torsional springs. 
Additionally, air drag and gravity are incorporated into dynamics.}
    \label{fig:mass-spring-model}
    \vspace{-0.4cm}
\end{figure}

\textbf{Notation.}
A DLO consists of $N$ cable links $(1, 2, \dots, N)$ and $N\!+\!1$ mass points $(0, 1, \dots, N)$.  
Each point has a mass $m_i$, and its state is denoted as $\bm{x}_i = [\,\bm{p}_i^\mathsf{T},\, \bm{v}_i^\mathsf{T}\,]^\mathsf{T}$,  
where $\bm{p}_i \in \mathbb{R}^3$ and $\bm{v}_i \in \mathbb{R}^3$ represent the position and velocity vectors, respectively.  
Therefore, the full state of the DLO has a dimension of $6(N\!+\!1)$ and is represented as $\bm{X}=[\bm{x}^T_0,...,\bm{x}^T_{N}]^T$.  
The initial (rest) length of the $i$-th cable link is denoted as $l_i^0$.  
The current link vector is defined as $\bm{l}_i = \bm{p}_i - \bm{p}_{i-1}$, with its length $l_i = \lVert \bm{l}_i \rVert$ and unit direction $\bm{u}_i = \bm{l}_i / \lVert \bm{l}_i \rVert$. 
The bending angle between two consecutive links $\bm{l}_i$ and $\bm{l}_{i+1}$ is expressed as  
$\beta_i = \arccos\!\left( \bm{u}_i \cdot \bm{u}_{i+1} \right)$. Finally, $\psi_i$ denotes the torsion angle of the $i$-th torsion spring.

\textbf{Linear spring–damper force} resists the change of distance and relative velocity between two connected mass points. 
Its expression on mass-point $i$ is given by
\begin{equation}
\begin{split}
\bm{F}_i^{\,\text{lin}} &= - k^s_i \big( l_i - l_i^0 \big) \bm{u}_i
+ k^s_{i+1} \big( l_{i+1} - l_{i+1}^0 \big) \bm{u}_{i+1}
 \\
-& c^s_i \big( (\bm{v}_i - \bm{v}_{i-1})^\mathsf{T} \bm{u}_i \big) \bm{u}_i
+ c^s_{i+1} \big( (\bm{v}_{i+1} - \bm{v}_{i})^\mathsf{T} \bm{u}_{i+1} \big) \bm{u}_{i+1},
\label{eq:Linear spring–damper force}
\end{split}
\end{equation}
where $k_i^s$ and $c_i^s$ denote the stiffness and damping coefficients of the $i$-th linear spring–damper element, respectively.

\textbf{Bending spring force} is an elastic restoring force generated when the bending angle (or curvature) between two adjacent links deviates from its reference value. Its expression on mass-point $i$ is given by
\begin{equation}
\begin{split}
\bm{F}_i^{\,\text{b}} = &\frac{k^b_{i-1} \beta_{i-1}}{l_i}
\frac{\bm{u}_i \times (\bm{u}_{i-1} \times \bm{u}_i)}{\sin \beta_{i-1}}
\minus
\frac{k^b_{i} \beta_i}{l_{i+1}}
\frac{\bm{u}_{i+1} \times (\bm{u}_i \times \bm{u}_{i+1})}{\sin \beta_i} \\
& \hspace{-2.2em} -
\frac{k^b_{i} \beta_i}{l_i}
\frac{\bm{u}_i \times (\bm{u}_i \times \bm{u}_{i+1})}{\sin \beta_i}
\plus
\frac{k^b_{i+1} \beta_{i+1}}{l_{i+1}}
\frac{\bm{u}_{i+1} \times (\bm{u}_{i+1} \times \bm{u}_{i+2})}{\sin \beta_{i+1}},
\label{eq:Bending force}
\end{split}
\end{equation}
where $k_i^b$ denotes the bending stiffness of the $i$-th bending spring~\cite{lv2017physically}.
The bending spring force can be decomposed into two parts.
In Eq.~\eqref{eq:Bending force}, the first and fourth terms correspond to the forces exerted on mass point $i$ by its adjacent bending springs,
while the middle two terms represent the reaction forces generated by the $i$-th bending spring itself.

\textbf{Bending damping force} exerts viscous resistance to the rate of change of bending deformation.
During dynamic manipulation, where the deformation rate of the system is relatively high, the bending damping can have a non-negligible influence on the overall system dynamics.
In \cref{proposition1}, we derive the analytical formulation of the bending damping force.

\begin{proposition}[]Let $c_i^b$ denote the bending damping coefficient of the $i$-th bending damper. The bending damping force acting on the $i$-th mass point is then given by
\begin{equation}
\begin{split}
\bm{F}_i^{\,\text{db}} = &\frac{c^b_{i\minus 1} \dot{\beta}_{i\minus 1}}{l_i}
\frac{\bm{u}_{i\minus 1} \minus \bm{u}_i \cos \beta_{i\minus 1}}{\sin \beta_{i\minus 1}}
\plus
\frac{c^b_{i} \dot{\beta}_i}{l_{i\plus 1}}
\frac{\bm{u}_{i\plus 1} \cos \beta_i \minus \bm{u}_{i}}{\sin \beta_i} \\
 & \hspace{-1.5em}-
\frac{c^b_{i} \dot{\beta}_i}{l_i}
\frac{\bm{u}_i \cos\beta_i \minus \bm{u}_{i\plus 1}}{\sin \beta_i}
\plus
\frac{c^b_{i\plus 1} \dot{\beta}_{i\plus 1}}{l_{i\plus 1}}
\frac{\bm{u}_{i\plus 1} \cos\beta_{i\plus 1} \minus \bm{u}_{i\plus 2}}{\sin \beta_{i\plus 1}},
\label{eq:bending damping force}
\end{split}
\end{equation}
where $\dot{\beta}_i$ represents the rate of change of the bending angle, obtained as
\begin{equation}
\dot{\beta}_i
= 
\frac{\bm{v}_i \minus \bm{v}_{i\minus1}}{l_i}
\frac{\bm{u}_i \cos \beta_i \minus \bm{u}_{i\plus 1}}{\sin \beta_i}
\plus  
\frac{\bm{v}_{i\plus 1} \minus \bm{v}_i}{l_{i\plus 1}}
\frac{\bm{u}_{i\plus 1} \cos \beta_i \minus \bm{u}_{i}}{\sin \beta_i}.
\label{eq:bending_angle_rate}
\end{equation}
\label{proposition1}
\end{proposition}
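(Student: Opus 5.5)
Since the bending damper is viscous, I will model it by a Rayleigh dissipation function
\[
\mathcal{D} = \tfrac12 \sum_j c^b_j \dot\beta_j^2 ,
\]
and define the damping force on mass point $i$ as the generalized force
\[
\bm{F}_i^{\,\text{db}} = -\frac{\partial \mathcal{D}}{\partial \bm{v}_i}
= -\sum_j c^b_j \dot\beta_j \,\frac{\partial \dot\beta_j}{\partial \bm{v}_i}.
\]
The main tool is the identity $\partial \dot\beta_j/\partial \bm{v}_i = \partial \beta_j/\partial \bm{p}_i$, which holds because $\beta_j$ depends only on positions, so that $\dot\beta_j = \sum_k (\partial\beta_j/\partial\bm{p}_k)\cdot \bm{v}_k$. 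The whole computation therefore reduces to finding the gradients of $\beta_j$ with respect to positions. The structure parallels the bending spring force in Eq.~\eqref{eq:Bending force}, which is $-\sum_j k^b_j\beta_j\,\partial\beta_j/\partial\bm{p}_i$, and this serves as a consistency check.

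\textbf{Step 1 (gradient of the angle).} Write $\cos\beta_j = \bm{u}_j\cdot\bm{u}_{j+1}$. Using $\partial \bm{u}_j/\partial \bm{l}_j = (I-\bm{u}_j\bm{u}_j^\mathsf{T})/l_j$, I obtain
\[
\frac{\partial \cos\beta_j}{\partial \bm{l}_j} = \frac{\bm{u}_{j+1}-\bm{u}_j\cos\beta_j}{l_j},
\qquad
\frac{\partial \cos\beta_j}{\partial \bm{l}_{j+1}} = \frac{\bm{u}_{j}-\bm{u}_{j+1}\cos\beta_j}{l_{j+1}} .
\]
Dividing by $-\sin\beta_j$ gives $\partial\beta_j/\partial\bm{l}_j$ and $\partial\beta_j/\partial\bm{l}_{j+1}$. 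Since $\bm{l}_j=\bm{p}_j-\bm{p}_{j-1}$, the chain rule gives $\partial\beta_j/\partial\bm{p}_j = \partial\beta_j/\partial\bm{l}_j - \partial\beta_j/\partial\bm{l}_{j+1}$, together with analogous expressions for $\bm{p}_{j-1}$ and $\bm{p}_{j+1}$.

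\textbf{Step 2 (the rate $\dot\beta_i$).} Summing $(\partial\beta_i/\partial\bm{l}_k)\cdot\dot{\bm{l}}_k$ over $k\in\{i,i+1\}$, with $\dot{\bm{l}}_i=\bm{v}_i-\bm{v}_{i-1}$, yields Eq.~\eqref{eq:bending_angle_rate} directly, reading the products there as inner products.

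\textbf{Step 3 (which angles depend on $\bm{v}_i$).} The point $\bm{p}_i$ enters only through $\bm{l}_i$ and $\bm{l}_{i+1}$, so only $\beta_{i-1}$, $\beta_i$ and $\beta_{i+1}$ depend on it. Collecting the contributions gives four terms:
\begin{itemize}
\item $\beta_{i-1}$ through $\bm{l}_i$, with coefficient $+1$;
\item $\beta_i$ through $\bm{l}_i$ (coefficient $+1$) and through $\bm{l}_{i+1}$ (coefficient $-1$);
\item $\beta_{i+1}$ through $\bm{l}_{i+1}$, with coefficient $-1$.
\end{itemize}
Multiplying each term by $-c^b_j\dot\beta_j$ and substituting the gradients from Step 1 should reproduce the four terms of Eq.~\eqref{eq:bending damping force} term by term. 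Boundary points are handled by dropping the terms whose indices are out of range.

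\textbf{Main obstacle.} I expect the difficulty to lie in the sign and index bookkeeping of Step 3, rather than in any conceptual step. Each gradient appears with a particular orientation ($\bm{u}_{j+1}-\bm{u}_j\cos\beta_j$ versus its negative), and these orientations must be matched against the signs printed in the statement. To control this, I would first verify the result on a planar three-point configuration, and also check that summing $\bm{F}^{\,\text{db}}$ over all points gives zero, as internal forces must. The singularity at $\sin\beta_j=0$ can be noted as removable in the limit but is not part of the identity. Finally, I would confirm the vector-triple-product form against Eq.~\eqref{eq:Bending force} using $\bm{a}\times(\bm{b}\times\bm{a}) = \bm{b}-\bm{a}(\bm{a}\cdot\bm{b})$ for unit $\bm{a}$.
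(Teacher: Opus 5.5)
Your proposal is correct and follows essentially the paper's route. The paper likewise computes $\partial\beta_i/\partial\bm{l}_i$ and $\partial\beta_i/\partial\bm{l}_{i+1}$, chains them to the three adjacent points to obtain $\dot\beta_i$, and then derives $\bm{F}^{\,\mathrm{db}}_{i\to k} = -c^b_i\dot\beta_i\,\partial\beta_i/\partial\bm{p}_k$ from the virtual-work relation $\delta W = -c^b_i\dot\beta_i\,\delta\beta_i$, which gives the same expression as your $-\partial\mathcal{D}/\partial\bm{v}_i$ combined with $\partial\dot\beta_j/\partial\bm{v}_i=\partial\beta_j/\partial\bm{p}_i$. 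Carrying out your Step 3 with the Step 1 gradients reproduces all four printed terms with the stated signs, so the sign bookkeeping you flagged as the main obstacle works out.
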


\begin{proof}


The partial derivative of the bending angle $\beta_i$ with respect to the link vector $\bm{l}_i$ can be expressed as
\begin{equation}
\begin{aligned}
\frac{\partial \beta_i}{\partial \bm{l}_i}
&= \frac{\mathrm{d}(\arccos(\cos \beta_i))}{\mathrm{d}(\cos \beta_i)}
   \frac{\partial}{\partial \bm{l}_i}
   \!\left(
      \frac{\bm{l}_i \!\cdot\! \bm{l}_{i+1}}
           {\|\bm{l}_i\|\,\|\bm{l}_{i+1}\|}
   \right) \\
&= \minus\frac{1}{\sin \beta_i}
   \frac{\|\bm{l}_i\|^2\bm{l}_{i+1}\minus(\bm{l}_i \cdot \bm{l}_{i+1})\,\bm{l}_i}
   {\|\bm{l}_i\|^3\,\|\bm{l}_{i+1}\|}
= \frac{\bm{u}_i \cos\beta_i \minus \bm{u}_{i+1}}{\sin \beta_i\,\|\bm{l}_i\|}.
\label{pf:1}
\end{aligned}
\end{equation}

Similarly, the derivative of $\beta_i$ with respect to the adjacent link $\bm{l}_{i+1}$ is given by
\begin{equation}
\frac{\partial \beta_{i}}{\partial \bm{l}_{i+1}}
= \frac{\bm{u}_{i+1} \cos\beta_i - \bm{u}_{i}}{\sin \beta_i\,\|\bm{l}_{i+1}\|}.
\label{pf:2}
\end{equation}

By applying the chain rule, the derivatives of $\beta_i$ with respect to the three connected mass points
$\bm{p}_{i-1}$, $\bm{p}_i$, and $\bm{p}_{i+1}$ satisfy the following relations:
\begin{equation}
\frac{\partial \beta_i}{\partial \bm{p}_{i-1}}
\equals \minus\frac{\partial \beta_i}{\partial \bm{l}_i},
\quad
\frac{\partial \beta_i}{\partial \bm{p}_i}
\equals \frac{\partial \beta_i}{\partial \bm{l}_i}
 \minus \frac{\partial \beta_i}{\partial \bm{l}_{i+1}},
\quad
\frac{\partial \beta_i}{\partial \bm{p}_{i+1}}
\equals \frac{\partial \beta_i}{\partial \bm{l}_{i+1}}.
\label{pf:3}
\end{equation}

Consequently, the time derivative of the bending angle $\beta_i$ can be written in terms of the velocities of the three adjacent mass points:
\begin{equation}
\dot{\beta}_i
= 
\frac{\partial \beta_i}{\partial \bm{p}_{i-1}} \cdot \bm{v}_{i-1}
+ \frac{\partial \beta_i}{\partial \bm{p}_{i}} \cdot \bm{v}_{i}
+ \frac{\partial \beta_i}{\partial \bm{p}_{i+1}} \cdot \bm{v}_{i+1}.
\label{pf:4}
\end{equation}

Substituting \Cref{pf:1,pf:2,pf:3} into \cref{pf:4} and rearranging yields \cref{eq:bending_angle_rate}.  
According to the principle of virtual work, 
$\delta W = -c_i^b\,\dot{\beta}_i\,\delta \beta_i 
= \sum_k \bm{F}_{i \to k}^{\,\mathrm{db}} \cdot \delta \bm{p}_k$, 
where $\bm{F}_{i \to k}^{\,\mathrm{db}}$ denotes the force exerted by the $i$-th bending damper on $k$-th mass point, which can be expressed as
\begin{equation}
\bm{F}_{i \to k}^{\,\mathrm{db}}
= -c_i^b\,\dot{\beta}_i\,
  \frac{\partial \beta_i}{\partial \bm{p}_k},
\quad
k \in \{\,i\!-\!1,\, i,\, i\!+\!1\,\}.
\label{pf:5}
\end{equation}

Finally, the total bending damping force acting on the $i$-th mass point is obtained by summing the contributions from the three neighboring bending dampers $(i\!-\!1,i,i\!+\!1)$:
\[
\bm{F}_i^{\,\mathrm{db}}
= \bm{F}_{i-1 \to i}^{\,\mathrm{db}}
 - \bm{F}_{i \to i-1}^{\,\mathrm{db}}
 - \bm{F}_{i \to i+1}^{\,\mathrm{db}}
 + \bm{F}_{i+1 \to i}^{\,\mathrm{db}}.
\]
By substituting \cref{pf:1,pf:2,pf:3,pf:5} into the above expression and simplifying, we obtain \cref{eq:bending damping force}.
\end{proof}

\textbf{Torsion spring force} consists of two components: material torsion and geometric torsion. Material torsion originates from the internal shear stress induced by twisting along the rope’s longitudinal axis, representing the material’s elastic resistance to angular deformation~\cite{theetten2008geometrically,valentini2011modeling}.
In a rope manipulation task, the free end allows this torsional energy to dissipate quickly through rotation, rendering the material torsion negligible. Hence, the torsion angle $\psi_i$  is the geometric torsion angle in our model.

Each cable link $\bm{l}_i$ together with its adjacent links $\bm{l}_{i-1}$ and $\bm{l}_{i+1}$ forms two planes, and the angle between these planes defines the geometric torsion angle $\psi_i$. 
By taking the partial derivative of the torsional potential energy with respect to the position of mass point $i$, 
the torsion spring force acting on this point can be obtained~\cite{lv2017physically} as
\begin{equation}
\bm{F}_i^{\,t} 
= -\frac{\partial E^t}{\partial \bm{x}_i}, 
\quad 
E^t = \frac{1}{2} \sum_{j=1}^{N} k_j^t \psi_j^2,
\label{eq:Torsion spring force}
\end{equation}
where $k_j^t$ denotes the torsional stiffness of the $j$-th torsion spring. 

\textbf{External forces} in our model consist of gravity and air resistance. 
The total external force acting on the $i$-th mass point is expressed as
\begin{equation}
\begin{split}
\bm{F}_i^{\,\text{ext}} = m_i\bm{g} - c^{air}\bm{v}_i,
\label{eq:External forces}
\end{split}
\end{equation}
where $\bm{g}$ denotes the gravitational acceleration vector, and $c^{\text{air}}$ is the air damping coefficient.


\textbf{Forward time integration} refers to the numerical process of advancing the system states over time by discretizing the given equations of motion. By combining \cref{eq:Linear spring–damper force,eq:Bending force,eq:Torsion spring force,eq:bending damping force,eq:bending_angle_rate,eq:External forces} with Newton’s second law, the equations of motion for each mass point are obtained
\begin{equation}
\begin{aligned}
\bm{a}_i &= 
\left( \bm{F}_i^{\,\text{lin}}
+ \bm{F}_i^{\,\text{b}}
+ \bm{F}_i^{\,\text{t}}
+ \bm{F}_i^{\,\text{db}}
+ \bm{F}_i^{\,\text{ext}} \right) / m_i,
\label{eq:dynamic}
\end{aligned}
\end{equation}
which is then integrated forward in time using the \emph{symplectic Euler} scheme
\begin{equation}
\begin{aligned}
\bm{v}_i^{t+1} &= \bm{v}_i^{t} + \Delta t\,\bm{a}_i^{t},\\
\bm{x}_i^{t+1} &= \bm{x}_i^{t} + \Delta t\,\bm{v}_i^{t+1},
\label{eular}
\end{aligned}
\end{equation}
where $\Delta t$ denotes the integration time step, and the superscript $t$ indicates the current time step.

\textbf{Incorporating control input.}
So far, the DLO has been modeled as a discrete autonomous system.
To obtain a fully controlled DLO model, we now introduce the control input $\bm{u}$.
The top end of the DLO is attached to the end-effector of a manipulator, which in our experiments is driven by a Cartesian-space velocity controller. This low-level controller is capable of accurately tracking smooth and physically feasible velocity commands, and for lightweight objects like ropes, the reaction force generated by its swinging on the manipulator can be neglected. Therefore, it is reasonable to assume that the commanded end-effector velocity $\bm{u}$ directly determines the velocity of the first mass point of the DLO:
\begin{equation}
\bm{v}_0 = \bm{u} \ \text{(control input)}.
\label{v=u}
\end{equation}
Under this constraint, the acceleration of the first mass point $\bm{a}_0$ becomes redundant and does not participate in the system evolution. Combining condition \eqref{v=u} with \cref{eq:dynamic,eular}, we obtain the discrete controlled DLO dynamics as
\begin{equation}
\bm{X}^{t+1} = \bm{f}^{\text{dlo}}_{\theta, \Delta t}(\bm{X}^{t}, \bm{u}^{t}),
\label{rope model}
\end{equation}
where $\theta$ denotes all the to-be-identified model parameters (e.g., $k_i^s$, $c_i^s$, $k_i^b$, etc) and $\bm{X}$ denotes full DLO state $\bm{X}=[\bm{x}^T_0,...,\bm{x}^T_{N}]^T$.

\subsection{System identification}\label{sec:identification}
After deriving the mathematical model of the controlled system, the model parameters $\theta$ are identified to obtain a high-fidelity dynamic model that accurately captures the real system dynamics.
Specifically, $\theta$ consists of the particle mass vector $\mathbf{m} = \{m_i\}_{i=0}^{N}$, the initial link-length vector $\boldsymbol{l}^0 = \{l_i^0\}_{i=1}^{N}$, the gravitational acceleration vector $\boldsymbol{g}$, the aerodynamic drag coefficient $c^{air}$, and the rope stiffness and damping parameters, including the linear spring and damping vectors $\boldsymbol{k}^s = \{k_i^s\}_{i=1}^{N}$ and $\boldsymbol{c}^s = \{c_i^s\}_{i=1}^{N}$, the bending spring and damping vectors $\boldsymbol{k}^b = \{k_i^b\}_{i=1}^{N-1}$ and $\boldsymbol{c}^b = \{c_i^b\}_{i=1}^{N-1}$, as well as the torsional stiffness vector $\boldsymbol{k}^t = \{k_i^t\}_{i=2}^{N-1}$, as illustrated on Fig.~\ref{fig:mass-spring-model}. In compact form, we denote the full parameter set as
\[
\theta = \{\mathbf{m},\, \boldsymbol{l}^0,\, \boldsymbol{g},\, c^{air},\, \boldsymbol{k}^s,\, \boldsymbol{c}^s,\, \boldsymbol{k}^b,\, \boldsymbol{c}^b,\, \boldsymbol{k}^t\}.
\]
Among these parameters, $\mathbf{m}$ and $\boldsymbol{l}^0$ can be obtained via direct measurement, $\boldsymbol{g}$ is known from physics, and the remaining parameters are estimated through system identification.

Since the model in \cref{rope model} is differentiable, a differentiable physics-based system identification approach is adopted as shown in \cref{alg_id}, where the symbol $\hat{\cdot}$ denotes predicted variables. For convenience, we denote the positions of all rope nodes as $\bm{P} = [\bm{p}_0^T,\ldots,\bm{p}_N^T]^T$ and the velocities as $\bm{V} = [\bm{v}_0^T,\ldots,\bm{v}_N^T]^T$, which are contained in $\bm{X}$. The core idea is to initialize the reference system and our model with the same initial state, excite them with identical inputs, quantify the discrepancy between their resulting state trajectories as the loss function, and update the model parameters via gradient-based optimization, so that the simulated dynamics progressively converge to those of the real system. However, in the real system the motion-capture perception system only provides position measurements rather than velocities. Therefore, only the position (measurable) part of the state trajectory is used for comparison. The rope remains stationary until the initial release, implying zero initial velocity. Together with the initial position provided by the motion-capture system, the complete initial state of the rope is therefore available.

\begin{algorithm}[t]
    \caption{Differentiable Physics System Identification}
    \textbf{Input:} Dataset $\{(\bm{P}^t,\bm{u}^t)\}_{t=0...T}$, initial horizon $H$, horizon increment $\Delta H$, sampling interval $h$, integration time step $\Delta t$, loss threshold $\varepsilon$ \\
    \textbf{Output:} Identified model parameters $\theta$
    \begin{algorithmic}[1] 
        \State Initialize parameters $\theta$, $\hat{\bm{P}}^0=\bm{P}^0$, $\hat{\bm{V}}^0=\bm{0}$
        \State Obtain $\hat{\bm{X}}^0$ by combining $\hat{\bm{P}}^0$ and $\hat{\bm{V}}^0$; set $\hat{\bm{X}}' \leftarrow \hat{\bm{X}}^{0}$
        \While{$H \leq T$}
            \For{$t = 0, \ldots, H$}
                \State \textbf{iterate} 
                $\hat{\bm{X}}' \leftarrow
                \bm{f}^{\text{dlo}}_{\theta,\Delta t}
                (\hat{\bm{X}}', \bm{u}^{t})$
                \textbf{for } $h/\Delta t$ \textbf{ steps} \label{alignment}
                \State $\hat{\bm{X}}^{t+1}\leftarrow\hat{\bm{X}}'$
                \Comment{$\hat{\bm{X}}^{t+1}$ includes $\hat{\bm{P}}^{t+1}$}
            \EndFor
            \State  $\bm{L_p} \leftarrow \frac{1}{H} \sum_{t=1}^{H} \|\hat{\bm{P}}^t - \bm{P}^t\|^2_2$
            \State Update $\theta \leftarrow \mathrm{Adam}(\theta, \nabla_\theta \bm{L_p})$
            \Comment{Autodiff}
            \If{$\bm{L_p} < \varepsilon$} 
                \State $H \leftarrow H + \Delta H$ \label{variable-horizon}
            \EndIf
        \EndWhile
    \end{algorithmic}
    \label{alg_id}
\end{algorithm}

In the simulation experiments, the rope consists of 21 mass points, resulting in nearly one hundred parameters to be identified. When processing long time-series data with such a large number of parameters, the increased recurrence depth (Alg.~\ref{alg_id}, lines~4--8) amplifies the parameter-to-loss nonlinearity, leading to a highly non-convex loss landscape and causing the optimization to be frequently trapped in suboptimal local minima. 
To address this issue, two curriculum-learning strategies are introduced. 
The first one is a variable-horizon strategy (Alg.~\ref{alg_id}: line~\ref{variable-horizon}), where a relatively short horizon $H$ is used at the beginning, and $H$ is gradually increased once the loss falls below a predefined threshold, until it eventually reaches the full data length $T$. The second one is a two-stage parameter strategy. In the first identification stage, all springs and dampers of the same type are constrained to share a common parameter across the rope, so that the rope is treated as a homogeneous system. 
After this stage converges, the identified parameters are used as initialization, and the constraint is relaxed so that each spring and damping parameter is treated independently, forming a heterogeneous rope model for a refined identification stage. These two curriculum-learning strategies significantly improve optimization stability and help the model converge toward the true system parameters.

Finally, since the sampling interval $h$ and the integration time step $\Delta t$ of the model may differ, an additional procedure is required to ensure temporal alignment of the trajectories (Alg.~\ref{alg_id}: line~\ref{alignment}). In the real-world experiment, the motion-capture system provides measurements at a sampling interval of $10$\,ms, whereas the integration time step of our model is $1$\,ms. Therefore, line~\ref{alignment} needs to be iterated 10 times to propagate the model to the next measurement instant.

\subsection{Augmented self-supervised neural controller learning} \label{sec:nn-control} \label{our controller}
After obtaining an accurate physical model, we are able to train a task-oriented controller based on it.
At this phase, we train a neural network to learn behaviors that minimize a given task-oriented loss function $\bm{L_\text{task}}(\cdot)$.
The controller $\pi_\phi(\cdot)$ takes as input the current rope state $\bm{X} = [\bm{x}_0^T,\ldots,\bm{x}_N^T]^T$ together with the task specification $\bm{G}$, and outputs a 3-D velocity command $\bm{u}$ for the robot end-effector in Cartesian space.
The specific designs of $\bm{L_\text{task}}(\cdot)$ and $\bm{G}$ for the rope stabilization and trajectory tracking tasks will be presented in Sec.~\ref{Experiments}.

\begin{algorithm}[t]
\caption{Physics-informed Self-supervised Training}
\textbf{Input:} Initial state set $\{\bm{X}_k^0\}_{k=1,\dots,K}$, task specification set $\{\bm{G}_k\}_{k=1,\dots,K}$, model parameters $\theta$, integration time step $\Delta t$\\
\textbf{Output:} Trained controller parameters $\phi$
\begin{algorithmic}[1]
    \State Initialize controller parameters $\phi$
    \For{each training iteration}
        \State \label{line:randomization1} $\{\Delta \theta_k\}_{k=1,\dots,K} \gets \text{sample\_noise}(\theta)$
        \For{$t = 0, \dots, T-1$}
            \State $\{\bm{u}_k^t\}_{k=1,\dots,K} \gets \{\pi_{\phi}(\bm{X}_k^t, \bm{G}_k)\}_{k=1,\dots,K}$
            \State $\{ \bm{X}_k^{t+1} \}_{k=1,\dots,K} 
            \gets \{ \bm{f}^{\text{dlo}}_{\theta + \Delta \theta_k,\Delta t}(\bm{X}_k^t, \bm{u}_k^t) \}_{k=1,\dots,K}$ \label{line:randomization2}

        \EndFor
        \State $\bm{L} \leftarrow \frac{1}{K} \sum_{k=1}^K \bm{L_\text{task}}\!\left(\{\bm{X}^t_k\}_{t=1}^{T}, \bm{G}_k \right)$ 
        \Statex \hfill $\triangleright$ Task-based $\bm{L}_\text{task}(\cdot)$ as in \eqref{eq:loss1} and \eqref{eq:loss_track}
        \State $\phi \leftarrow \mathrm{Adam}(\phi, \nabla_\phi \bm{L})$ 
        \Comment{Autodiff}
    \EndFor
\end{algorithmic}
\label{alg_ctl}
\end{algorithm}

The self-supervised learning process is summarized in \cref{alg_ctl}.
In each training iteration, a batch of $K$ rollouts is executed in parallel. 
In each training rollout, the controller interacts with the rope model for $T$ simulation steps,
producing a state trajectory $\{\bm{X}_k^t\}_{t=1}^{T}$.
This trajectory, together with the task specification $\bm{G}_k$, is then fed into the loss function
$\bm{L}(\cdot)$ to compute the batch loss.
Finally, the gradients are computed via backpropagation and used to update the controller parameters.
To enhance robustness and generalization, our framework incorporates several augmentation strategies during training. 
Specifically, we propose four key operations to achieve this goal:

\textbf{Initial state diversity:} 
To enable the controller to handle diverse configurations,
we ensure that the initial state set $\{\bm{X}_k^0\}_{k=1,...,K}$ is sufficiently diverse,
so that rope states encountered in practice can be viewed as interpolations within this set.
To construct such diversity, we apply random motions to the rope and record its states at different time steps.
Meanwhile, the data collected during system identification is also included in the initial state set.
In the experiment phase, around $10k$ states were collected with this method.
However, even with these samples, the collected states only cover a few sparse trajectories in the 3D physical space; thus, additional data augmentation is required.

\textbf{Data augmentation:} 
All the recorded rope states are first translated so that their top endpoints coincide at a common point.
Then, we uniformly rotate each rope around the $z$-axis by small angular increments, covering $360^\circ$.
Lastly, the replicated data are duplicated along the $x$, $y$, and $z$ directions with equal spacing.
In total, $\sim2$ million states are generated after this augmentation in the experiment.
Through these spatial rotations and translations, the resulting dataset achieves comprehensive coverage of the 3D physical space.

\textbf{Self-supervised DAgger:} 
We extend the core idea of DAgger \cite{ross2011reduction} to a self-supervised learning setting, enabling automatic detection and correction of distribution shifts without requiring expert labels.
Since the controller is trained to minimize the loss associated with the resulting state trajectory, a well-trained controller should, during deployment, continuously decrease the loss or maintain it
at a low level, which corresponds to successful task execution. Therefore, in practice, we monitor the loss evolution online. When the loss fails to decrease, exhibits abnormal growth over a period
of time, or even diverges, the corresponding states are identified as out-of-distribution (OOD) samples. These OOD samples, together with their corresponding task specifications, are then added back
into the initial-state and task-specification sets with higher sampling weights for further offline policy refinement.

Specifically, in the rope stabilization task, the rope energy is used as the loss, and the controller is expected to continuously dissipate this energy until it approaches zero, which corresponds to the rope coming to rest. During deployment, we compute the rope energy in real time; if the energy abnormally increases or oscillates around a positive value, we regard this as an OOD event and record the corresponding state.
In the trajectory-tracking task, the controller is expected to continuously drive the rope tip to follow a moving target; therefore, the loss is defined as the deviation between the rope-tip trajectory and the target trajectory. During deployment, we monitor the real-time distance between the rope tip and the target; when this distance exceeds a predefined threshold, an OOD event is triggered, and both the current rope state and the associated trajectory segment are recorded.
This automatic detection and correction mechanism enables the controller to progressively compensate for deficiencies in the training data and ultimately achieve robust and reliable performance without requiring expert demonstrations.

\textbf{Domain randomization:} 
During training, additional noise is injected into the rope model to enhance the controller’s robustness and generalization across ropes with different physical properties. 
In each iteration, we sample $K$ Gaussian noise vectors $\Delta\theta_k$ and add them to the original model parameters to simulate ropes with varying characteristics (Alg.~\ref{alg_ctl}: line~\ref{line:randomization1} and \ref{line:randomization2}). 
Experimental results show that the controller trained with this strategy can effectively handle ropes with diverse physical attributes.

\section{Rope Perception}\label{sec:perception}

Estimating the rope states constitutes a challenging perception problem as ropes are thin and lightweight objects that are hard to distinguish visually, especially when they move fast.
High frequency and high accuracy perception is vital for achieving agile manipulation. 
Thus, we primarily used high-performance marker-based perception to evaluate the control performance of SPiD and the baseline. 
However, marker-based systems are not feasible in many application scenarios.
For this reason we also developed a markerless perception system to validate the application of our method in a more affordable single camera setup.
Despite the physical limitations of the markerless system, our experiments show successful rope stabilization in Sec.~\ref{sec:exp-markerless}. 
We detail both perception approaches below. 

\subsection{Marker-based perception} 
We use off-the-shelf OptiTrack motion capture system as our default perception method. 
It requires multiple infrared cameras and reflective markers on the objects to track and reconstruct the 3D positions of the markers.
For rope tracking, a series of equally distanced markers are attached on the rope. 
The resulting measurements provide high-precision and high-rate state feedback as an external perception module.
Specifics of our setup are presented in Sec.~\ref{sec:real-world-stabilization}.

\subsection{Markerless perception}\label{sec:markerless-method} 

We propose a markerless, vision-based pipeline for rope detection and segmentation using a single RGB camera, enabling robust rope perception in realistic manipulation scenarios without reliance on external motion capture systems or artificial markers. 
To this end, we design and evaluate a deep learning–based segmentation pipeline, exploring multiple architectural and implementation choices and refining the system to satisfy both real-time performance and segmentation accuracy requirements.

\subsubsection{Rope detection and segmentation} 
The backbone of our markerless perception approach is the detection and segmentation of the rope from RGB images. To this end, we investigated several relevant pretrained models, including Grounding Dino\cite{liu2024grounding} for rope detection which is a powerful, zero-shot object detection model that allows to find and draw bounding boxes around any object in an image by using a text prompt. For segmentation, we choose Segment Anything Model (SAM) \cite{Kirillov_2023_ICCV}, which is known to perform well on generic objects, and Mobile-SAM \cite{zhang2023faster}, a lightweight variant designed for faster inference. In addition, we collected a dedicated rope RGB dataset and fine-tuned YOLOv11-seg (medium size) \cite{ultralyticsyolo}, a unified model for rope detection and segmentation that directly predicts instance-level segmentation masks. We compared these segmentation approaches in terms of computational cost and prediction performance and ultimately selected our fine-tuned YOLOv11-seg model, as it provides the best accuracy and inference time. Detailed comparisons and implementation details are presented in Sec.~\ref{sec:exp-markerless}.

To tailor the segmentation model to the rope perception task, we collected a dedicated rope RGB dataset, in which images were annotated with instance-level segmentation masks. The data collection and annotation procedures are detailed in Sec.~\ref{sec:exp-markerless-segmentation}. Based on this dataset and its annotations, we fine-tuned a YOLOv11-seg model, which jointly performs object detection and instance segmentation in a single forward pass. The model directly predicts bounding boxes, class confidences, and pixel-wise segmentation masks for the rope class. Post-processing applies confidence thresholding to remove unreliable detections, after which all valid instances are merged to form a consolidated rope segmentation.

\subsubsection{Rope splitter algorithm} 
For precise and high-fidelity manipulation of a rope, we split the rope into equidistant points as illustrated in Fig.~\ref{fig:mass-spring-model}, and include their positions and velocities in the rope state $\bm{X}$ (Sec.~\ref{sec:modeling}).
We developed an algorithm to achieve consistent and computationally efficient rope splitting for a given segmentation mask. We preprocess the mask using the \texttt{skeletonize} procedure\footnote{scikit-image.org} to reduce it into the central points $\bm{P}_\text{m} = \{\bm{p}_{m,j} \in \mathbb{R}^2| j=0\ldots M\}$. The algorithm iteratively processes these points to identify $N$ split points $\bm{p}_{s,i}$, starting from the initial mask point ($\bm{p}_{s,0}=\bm{p}_{m,0}$). For each split point $\bm{p}_{s,i}$, the next split point $\bm{p}_{s,i+1}$ is identified as the closest point whose distance $d_{i,j}$ ($\| \bm{p}_{m,j} - \bm{p}_{s,i}\|$) is greater than a predefined splitting distance $d_{min}$, and less than the distance to the previous splitting point $\bm{p}_{s,i-1}$ to avoid going backwards:

\begin{equation}
\begin{aligned}
    &\bm{p}_{s,i+1} \leftarrow
\arg\min_{\bm{p}_{m,j} \in \bm{P}_m}
\;
d_{i,j}
\quad \text{s.t.}\quad\\
    d_{i,j}& \ge d_{\min},\quad\;
    d_{i,j} < \|\bm{p}_{m,j}-\bm{p}_{s,i{-}1}\|.
\end{aligned}
\end{equation}


The main bottleneck of this algorithm is finding the closest candidate points to the current split point $\bm{p}_{s,i}$, which has $\mathcal{O}(M^2)$ brute force complexity.
For efficiency, we sort the points into a \textit{KD tree} structure at each frame, which is used to efficiently query the closest points to a given point.
In our preliminary tests, this choice reduced the rope splitting time from $1092$ \textit{ms} (brute force) to $15$ \textit{ms} (KD tree) for a segmentation mask of $3300$ pixels.

Please note that this approach ignores the curvature between each split point, causing drift when the curvature is high.
We alleviate this issue by segmenting at a higher fidelity first (e.g., $c N$ splits), and selecting $N$ equally distanced final points among them.

\section{Experiments and Results}\label{Experiments}

We evaluate our contributions in four sections. 
\mbox{Section~\ref{sec:exp-model-val}} presents the model validation experiment that measures the effect of the novel modeling terms. 
Section~\ref{sec:exp-stabilization} consists of our core experiments, showing the performance of the proposed controller in an agile rope manipulation task, under changing object configurations and external perturbations.
In Section~\ref{sec:exp-markerless}, the rope perception approach is replaced with the more affordable but low-fidelity markerless perception. SPiD achieves effective rope stabilization also in this condition. 
Section~\ref{sec:exp-trajectory} demonstrates the generalization capacity of SPiD by applying it on the rope trajectory tracking task, which is distinct from rope stabilization.

\subsection{Model validation \label{sec:exp-model-val}}

\begin{figure*}[t]
    \centering
    \includegraphics[width=0.95\linewidth]{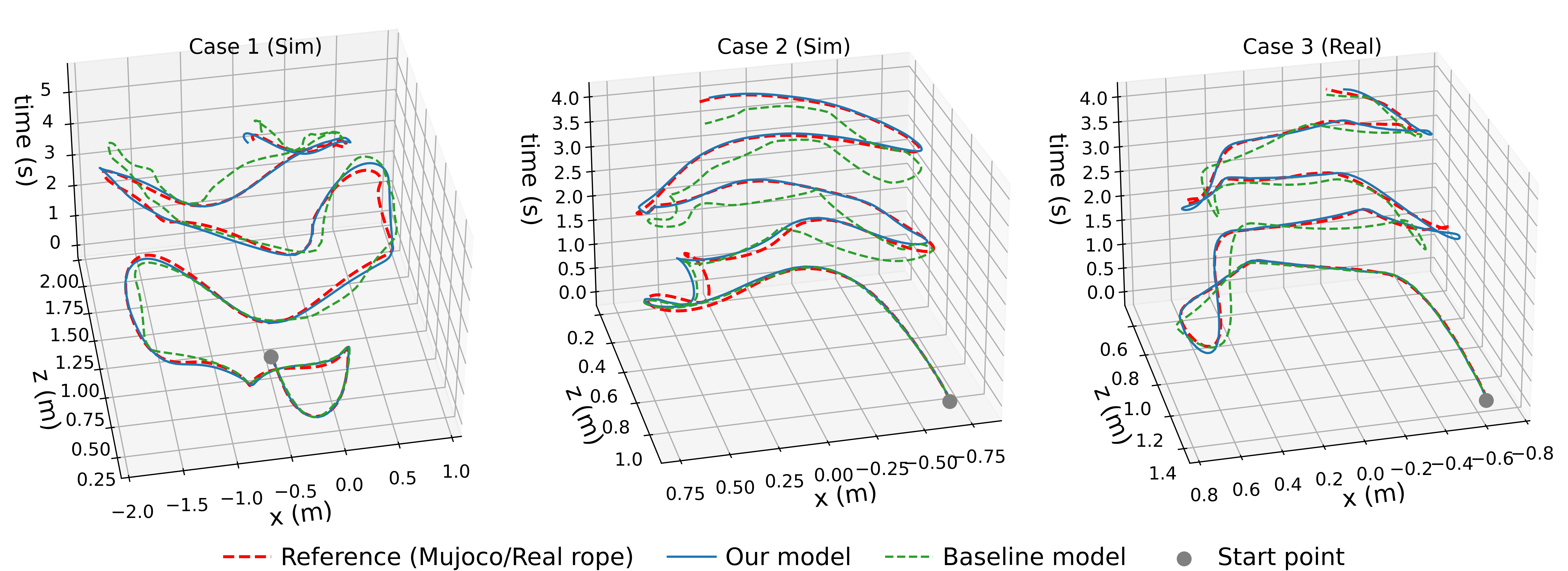}
    \caption{\textbf{Model validation. }Rope tip trajectories of the reference rope (Mujoco or real), \textit{our model}, and the \textit{baseline model} under three unseen initial states. Our model can predict future states with higher accuracy than the baseline model.
    }
    \label{fig:model validation}
\end{figure*}

\begin{figure*}[t]
    \centering
    \includegraphics[width=1.0\linewidth]{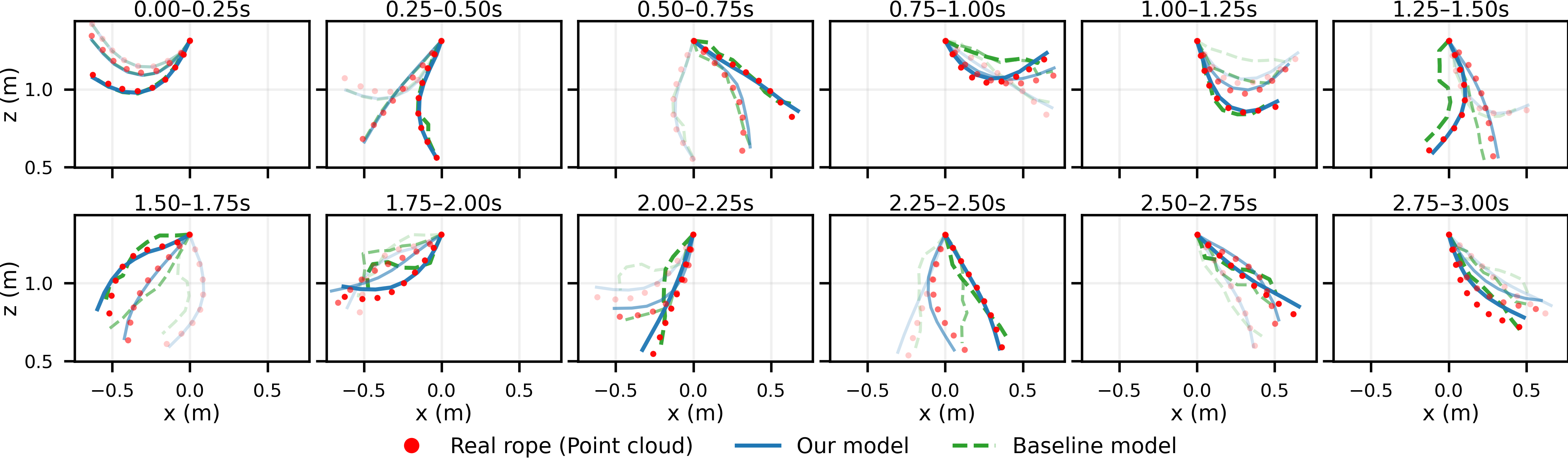}
    \caption{\textbf{Rope's motion sequence of Case 3 (Real).} The motion trajectory of the rope over the first 3\,s after release. Each subplot corresponds to a 0.25\,s time window and shows three uniformly sampled rope configurations. Lighter colors indicate earlier time instances.
    }
    \label{fig:motion sequence}
    \vspace{-0.4cm}
\end{figure*}
\begin{table}[t]
\centering
\caption{RMSE of the rope tip position prediction in three experiment cases.}
\label{tab:rmse1}
\begin{tabular}{lccc}
\toprule
Model $\backslash$ Case & Case 1 (Sim) & Case 2 (Sim) & Case 3 (Real) \\
\midrule
Baseline model & 0.1510 & 0.1347 & 0.0810 \\
Our model    & \textbf{0.0704} & \textbf{0.0442} & \textbf{0.0591} \\
\bottomrule
\end{tabular}
\vspace{-0.2cm}
\end{table}

Compared to the previous mass–spring model \cite{lv2017physically}, we incorporate damping components to better capture the dynamic characteristics of the rope. To verify the higher fidelity of \textit{our model}, the previous method is used as the \textit{baseline model} \cite{lv2017physically} for comparison.
We validate the proposed approach both in simulation using the MuJoCo platform and in real-world experiments.
Accordingly, two rope models are identified using Alg.~\ref{alg_id}: One for simulation and one for the real world.
During training, both the \textit{baseline model} and \textit{our model} are trained on identical datasets, using 5,000 state-action samples in simulation and 2,000 samples in the real-world experiment, demonstrating high data efficiency.
During evaluation, two unseen initial states are tested in the MuJoCo simulation, and one unseen initial state is tested in the real-world experiment.
The RMSE between the rope tip position and the reference rope tip position is used as a quantitative metric to assess model accuracy.



The identification results are shown in \cref{fig:model validation}, where each curve represents the trajectory of the corresponding rope tip. 
All three models start from identical unseen initial states and then evolve solely according to their own system dynamics.
For clarity of visualization, the rope motion is constrained to mainly lie in the \(x\text{--}z\) plane, while the displacement along the \(y\)-axis is relatively small and therefore neglected.
As a result, \cref{fig:model validation} illustrates only the rope tip \(x\text{--}z\) trajectories over time.

In the MuJoCo simulation experiments, the first case adopts a relatively small initial bending angle, and the top end of the rope is moved within the \(x\text{--}z\) plane over a total duration of \(5.5\,\mathrm{s}\).
In contrast, the second case uses a larger initial bending angle, while the top end of the rope is fixed for \(4\,\mathrm{s}\).
As shown in Case~1 and Case~2 of \cref{fig:model validation}, our model demonstrates more accurate modeling performance.
Specifically, our model closely tracks the ground truth throughout the entire experiment, whereas the baseline model matches the ground truth for only approximately \(3.5\,\mathrm{s}\) in the first case and approximately \(1.5\,\mathrm{s}\) in the second case.
Similar trends are reflected in \cref{tab:rmse1}, where our model achieves significantly lower RMSE values compared to the baseline model.

In the real-world experiment, a large initial bending angle is applied to the rope. After release, the top end of the rope is kept fixed, allowing the rope to swing freely. Case~3 in \cref{fig:model validation} shows the rope tip trajectories over the first \(4\,\mathrm{s}\).
To better illustrate the modeling performance, \cref{fig:motion sequence} presents the complete rope motion over the first \(3\,\mathrm{s}\) in Case~3. Red points denote the real rope node point cloud obtained from the OptiTrack system, the blue curve represents the rope motion predicted by our model, and the green dashed curve represents that predicted by the baseline model.
As can be observed, during the initial \(0.75\,\mathrm{s}\), both our model and the baseline model accurately reproduce the real rope motion. However, in the \(0.75\text{--}1.0\,\mathrm{s}\) interval, the baseline model begins to exhibit noticeable errors in the rope configuration. 
After this point, although it remains consistent with the real rope in terms of the overall swinging trend, significant distortions appear in the detailed rope shape.
In contrast, our model is able to accurately capture the evolution of the rope shape throughout the entire process. This observation is further supported by \cref{tab:rmse1}, which shows that our model achieves higher accuracy than the baseline model in Case~3.

\subsection{Rope stabilization}\label{sec:exp-stabilization}
We use the rope stabilization task as our primary testbed for evaluating our neural controller's agile manipulation performance. 
This task requires fast responses to the highly dynamic and nonlinear rope motions.

We demonstrate the performance of the proposed controller through simulation and real-world experiments.
In each of simulation and real-world settings, a single neural controller is trained. Then, it is tested across different rope types and initial positions to demonstrate its effectiveness and generalization capability.

In this task, we do not use task specification ($\bm{G}=\emptyset$) and use an energy-based loss for self-supervised control learning as detailed below.  

\subsubsection{Task formulation}\hfill\break\noindent
\indent\textbf{Loss function.}
We use the total energy of the system to quantify the degree of rope swing.
Accordingly, the controller is trained to perform maximum negative work at each step, thereby stabilizing the rope as soon as possible.
The rope energy consists of the gravitational potential and kinetic energies of all mass points, as well as the elastic potential energy stored in the springs:
{
\begin{equation}
\begin{split}
E_{\text{rope}}\left(\bm{X}\right)  =&
\sum\nolimits_{i=0}^{N}\!\left(m_i g z_i + \tfrac{1}{2} m_i \lVert \bm{v}_i \rVert^2\right)
+ \tfrac{1}{2}\!\sum\nolimits_{i=1}^{N-1}\! k_i^b \beta_i^2 \\
&+ \tfrac{1}{2}\!\sum\nolimits_{i=1}^{N}\! k_i^s (l_i - l_i^0)^2
+ \tfrac{1}{2}\!\sum\nolimits_{i=1}^{N}\! k_i^t \psi_i^2,
\label{eq:total_energy}
\end{split}
\end{equation}
}where $\bm{X}$ denotes the full state of the rope, and $z_i$ represents the third (vertical) component of the position vector $\bm{p}_i$.
In this task, the vertical component of the velocity command $\bm{u}$ is fixed to zero, as the loss function $E_{\text{rope}}(\cdot)$ accounts for gravitational potential energy; otherwise, the controller would continuously move the end-effector downward to minimize total energy.

During training, we only consider the rope energy at the final time step. Therefore, given a state trajectory $\{\bm{X}^t\}_{t=1}^{T}$, the training loss is defined as the rope energy at the terminal state:
\begin{equation}
\bm{L_\text{task}}\left(\{\bm{X}^t\}_{t=1}^{T}, \bm{G}= \emptyset\right) = E_{\text{rope}}(\bm{X}^{T}).
\label{eq:loss1}
\end{equation}

\begin{figure}[t]
    \centering
    \subfloat[\label{fig:a}]{\includegraphics[width=0.33\linewidth]{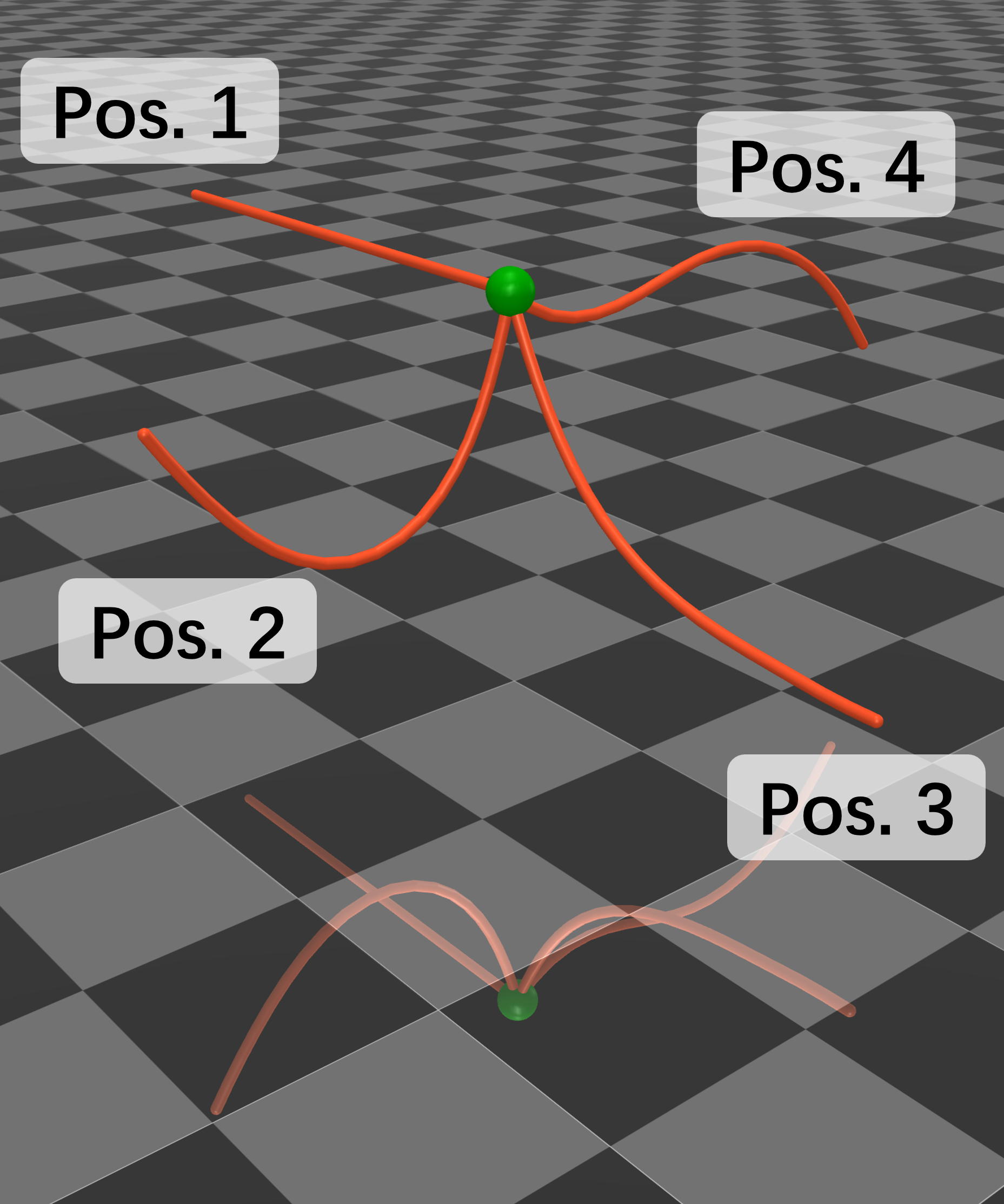}}
    \hfill
    \subfloat[\label{fig:b}]{\includegraphics[width=0.33\linewidth]{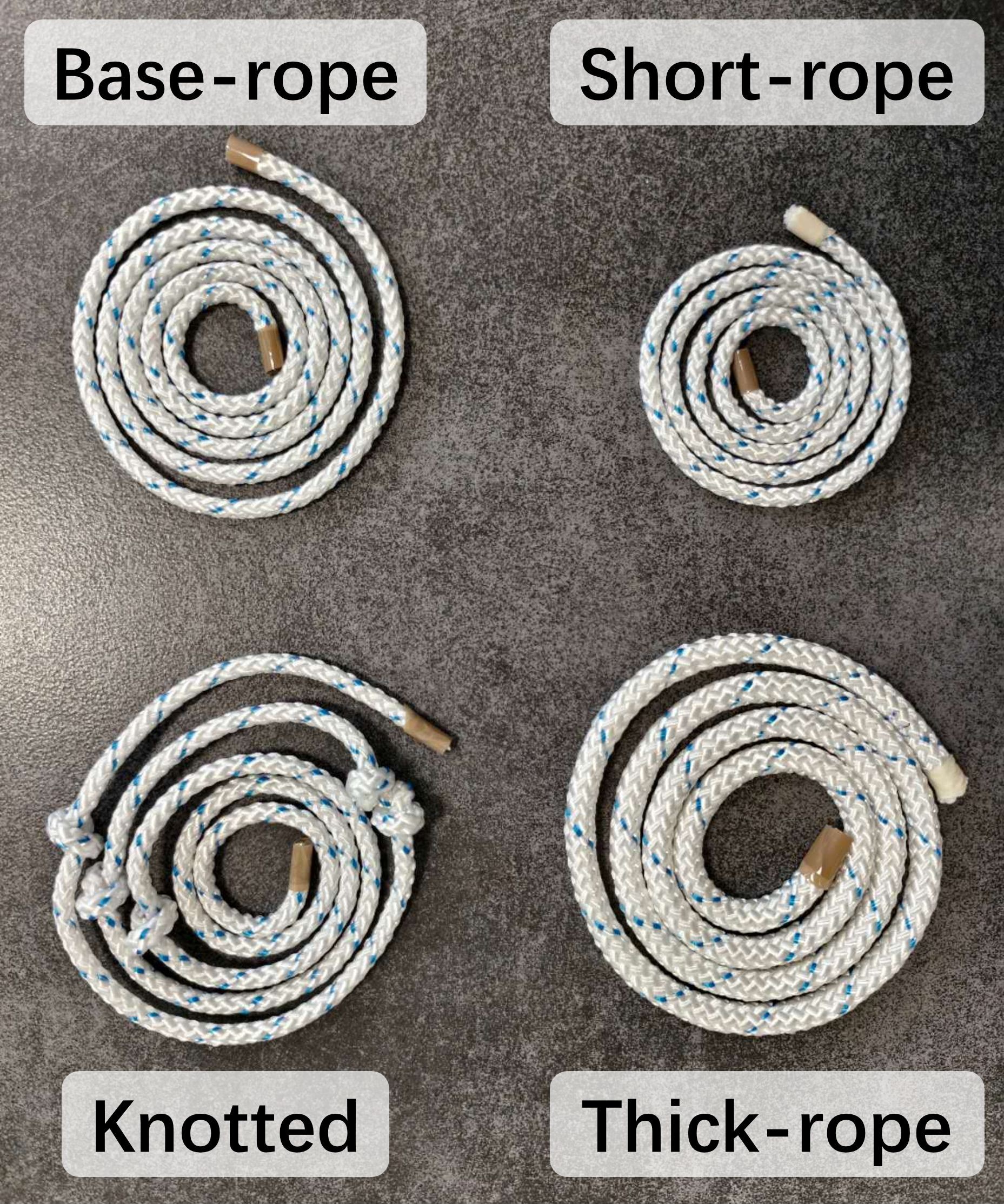}}
    \hfill
    \subfloat[\label{fig:c}]{\includegraphics[width=0.33\linewidth]{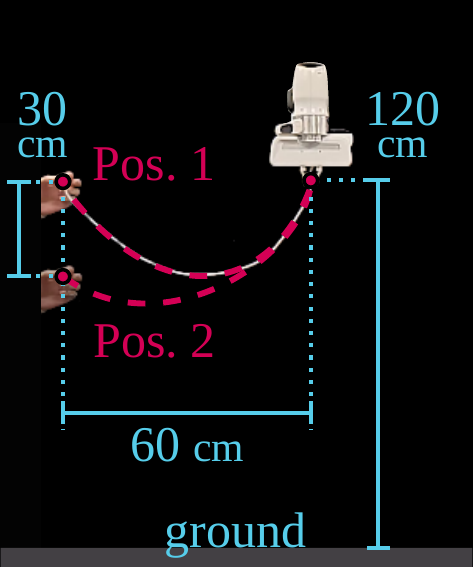}}
    \vspace{-0.1cm}
    \caption{\textbf{Test cases.}
    (a) Four unseen initial rope positions are used in simulation.
    (b) Four ropes with different masses, lengths and non-uniform mass distributions are used in the real world.
    (c) Two initial rope positions are used in the real world.}
    \label{fig:comparison}
    \vspace{-0.4cm}
\end{figure}

\textbf{Metrics.}
We continue to use the energy $E_{\text{rope}}(\cdot)$ defined in \cref{eq:total_energy} to measure the swinging motion of the rope in the evaluation.
The rope is considered stabilized when its energy decreases to $1\%$ of the initial value.

\begin{figure*}[h]
    \centering
    \includegraphics[width=\textwidth]{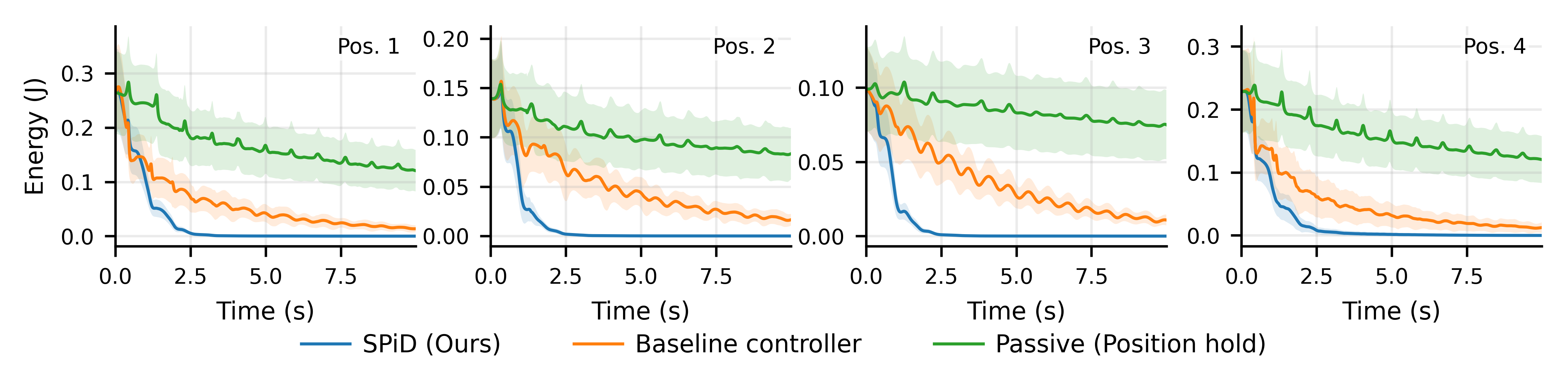}
    \vspace{-0.6cm}
    \caption{\textbf{Simulation results.} Performance of the three controllers under 4 different initial rope positions. 
    Each curve represents the mean energy profile over $5^3$ different rope settings. Shaded regions show one standard deviation. 
    SPiD achieves the fastest and most stable convergence, while the baseline controller exhibits oscillations in some cases.
}
    \label{fig:sim result}
    \vspace{-0.4cm}
\end{figure*}

\begin{figure*}[t]
    \centering
    \includegraphics[width=\textwidth]{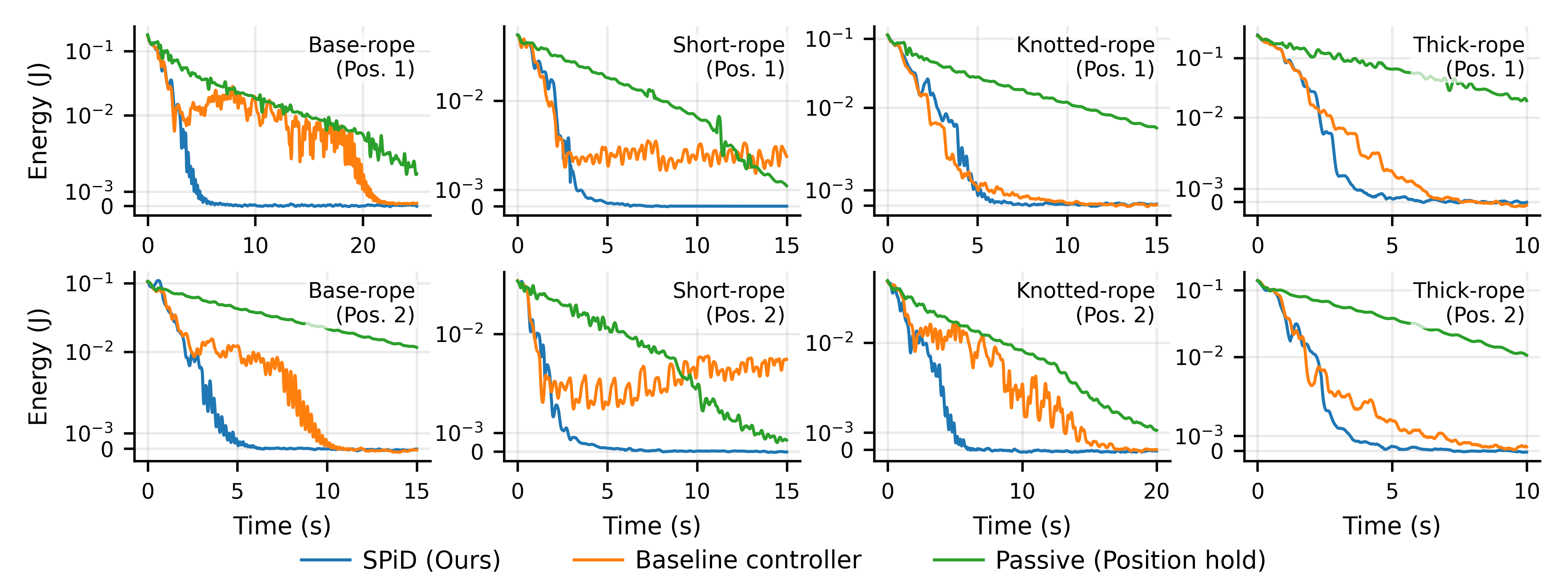}
    \vspace{-0.6cm}
    \caption{\textbf{Real-world results.} Comparative experiments of three controllers were conducted on four different ropes under two initial positions. Our controller consistently achieved the best performance across all scenarios, demonstrating both the effectiveness and generalization capability of the proposed algorithm.}

    \label{fig:real result}
    \vspace{-0.4cm}
\end{figure*}

        
        

\subsubsection{Baseline controller}
\label{baseline}
To validate the effectiveness of our proposed controller, we implement the baseline from~\cite{chen2017swing}. 
This baseline was originally developed for anti-swing control in a crane system, 
where the load is stabilized by controlling the motion of the cable's top end. 
The method models the cable as a double pendulum and proves the stability of the corresponding control law. 
In the original work, the control was applied only along the $x$-axis, 
while in our implementation, it has been extended to the $x$--$y$ plane.

\subsubsection{Simulation experiments}\hfill\break\noindent
\indent\textbf{Setup.}
The simulation experiments are conducted in MuJoCo 3.2.2 using a 1m rope model composed of 40 serially connected rigid segments. 
We manipulate the rope by controlling the velocity of its top end in the $x$--$y$ plane at $100\,\mathrm{Hz}$. 
Through the Python interface, the position and velocity of each segment's base can be accessed. 
We evaluate the generalization capability of the proposed algorithm on four different initial rope positions (\cref{fig:a}). For each initial position, we test the methods on $5^3$ rope models with parameters uniformly sampled for the twist modulus in $[0,\, 1{\times}10^7]~\mathrm{Pa}$, bending modulus in $[0,\, 1{\times}10^7]~\mathrm{Pa}$, and segment mass in $[1.5,\, 3.5]~\mathrm{g}$.

\textbf{Results.}
The simulation results are shown in \cref{fig:sim result}. 
The four subfigures correspond to the four initial positions illustrated in \cref{fig:a}. 
In each subfigure, the blue, orange, and green curves represent the energy evolution of the rope under our controller (Sec.~\ref{our controller}), the baseline method (Sec.~\ref{baseline}), and the passive condition, respectively. 
The passive case means that the top end of the rope remains fixed, and the energy decays naturally through air drag and internal damping. 
Each curve shows the average energy trajectory over $5^3$ different rope parameter settings. 
It can be observed that our controller achieves the fastest and smoothest performance in all cases, while the baseline controller takes longer and exhibits noticeable oscillations. 

\begin{figure}[t]
    \centering
    \includegraphics[width=1.0\linewidth]{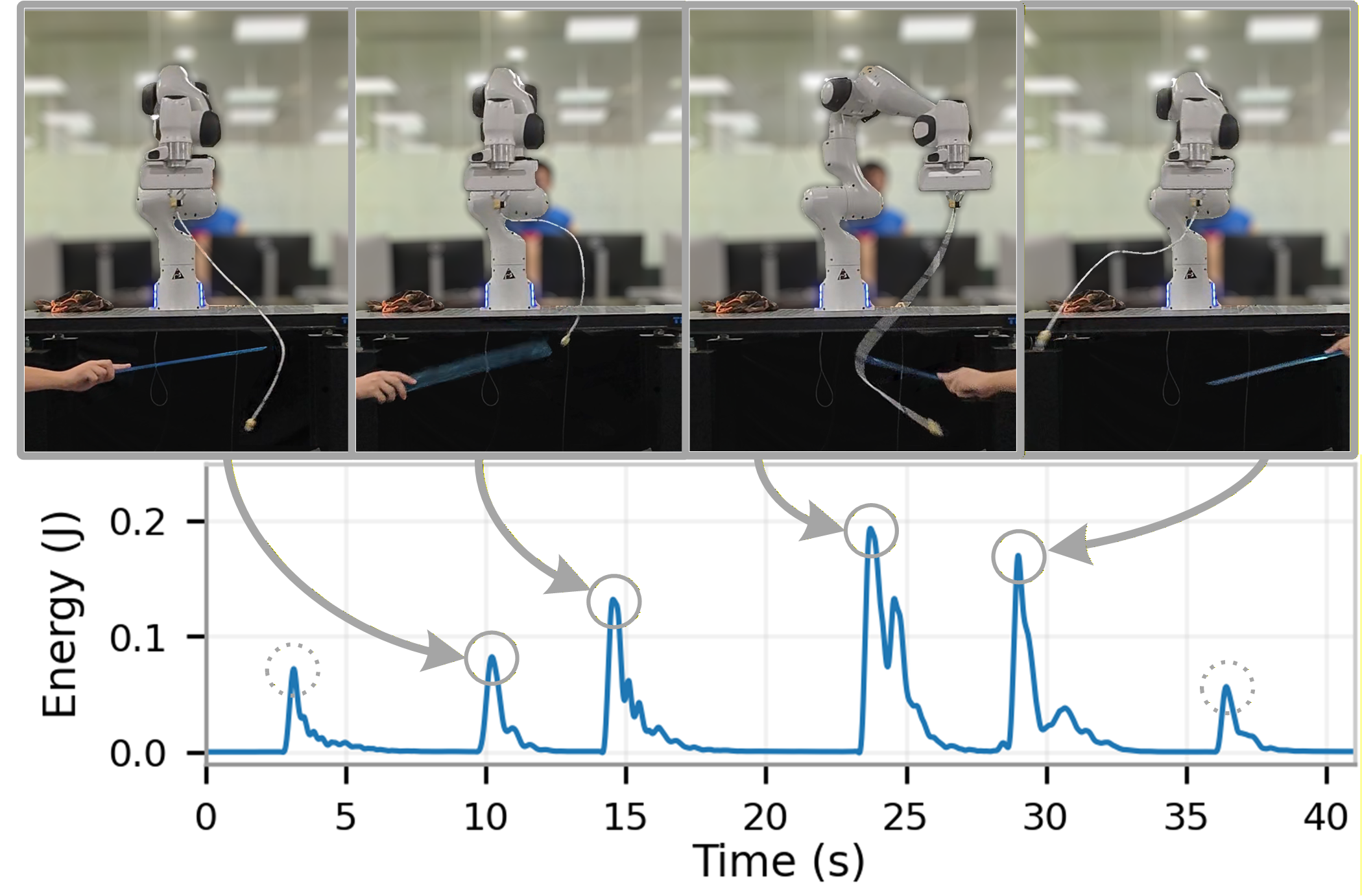}
    \caption{\textbf{Robustness to external disturbances.} The rope is manually disturbed to evaluate system robustness. The corresponding energy profile and key snapshots of the experiment are presented. The system rapidly dissipated the injected energy and achieved stability.}
    \label{fig:disturbance}
    \vspace{-0.4cm}
\end{figure}

\begin{table}[t]
    \centering
    \caption{Rope stabilization time (s) in real-world experiments. ``--'' indicates that the rope did not reach the stabilization condition within the experiment duration.}
    \label{tab:settle time}
    \setlength{\tabcolsep}{4pt}
    \begin{tabular}{llccc}
        \toprule
        & Rope type & Passive & Baseline  & SPiD (Ours) \\
        \midrule
        
        \multirow{4}{*}{\rotatebox{90}{Pos.1}}
            & Base-rope    & --   & 19.50 & \textbf{4.07} \\
            & Short-rope   & --   & --    & \textbf{3.69} \\
            & Knotted-rope & --   & \textbf{4.89}  & \textbf{4.77} \\
            & Thick-rope   & --   & 4.97  & \textbf{2.92} \\
        \midrule
        \multirow{4}{*}{\rotatebox{90}{Pos.2}}
            & Base-rope    & --   & 9.20  & \textbf{4.23} \\
            & Short-rope   & --   & --    & \textbf{3.58} \\
            & Knotted-rope & --   & 14.05 & \textbf{5.03} \\
            & Thick-rope   & --   & 5.29  & \textbf{3.24} \\
        
        \bottomrule
    \end{tabular}
\end{table}

\subsubsection{Real-world experiments}\hfill\break\noindent \label{sec:real-world-stabilization}
\indent\textbf{Setup.}
In real-world experiments, we use a Franka Emika Panda robotic arm equipped with a Franka Hand gripper, which grasps the top end of the rope. 
The desired velocity of the robot end-effector is updated at $100\,\mathrm{Hz}$, while the velocity controller runs at $1000\,\mathrm{Hz}$.
Rope segment positions are tracked at 100 Hz using the OptiTrack motion capture system consisting of 12 infrared cameras. For this purpose, we attached $9$ equally spaced reflective markers along the rope. 
The velocity of each marker is then estimated by differentiating the position data, followed by filtering to reduce noise, thereby reconstructing the full state of the rope.

To evaluate the generalization capability of the proposed algorithm, four types of ropes are used, as shown in \cref{fig:b}:  
\textit{Base-rope} -- a cotton rope ($16\,\mathrm{g/m}$, $80\,\mathrm{cm}$ in length), also used for system identification;  
\textit{Short-rope} -- a shorter version of the base-rope ($16\,\mathrm{g/m}$, $60\,\mathrm{cm}$ in length);  
\textit{Knotted-rope} -- the base-rope tied with five knots ($80\,\mathrm{cm}$ after tying);  
\textit{Thick-rope} -- a thicker rope ($40\,\mathrm{g/m}$, $80\,\mathrm{cm}$ in length).  
To enrich the dynamic characteristics of the ropes, additional weights of $15\,\mathrm{g}$, $7\,\mathrm{g}$, $7\,\mathrm{g}$, and $15\,\mathrm{g}$ are attached to the tips of the four ropes, respectively.  
For each rope, two different initial positions are selected for testing, as illustrated in \cref{fig:c}.

\indent\textbf{Results.}
The real-world experiment results are shown in \cref{fig:real result}, for four types of ropes (\cref{fig:b}) and two initial positions (\cref{fig:c}). 
For the base-rope, our controller quickly stabilizes the rope, while the baseline controller experiences oscillations before eventually settling down, although it still performs significantly better than the passive case. 
This oscillation behavior arises because the baseline controller is derived from a simplified rope model, where the rope is approximated as a double pendulum. 
In reality, however, the rope exhibits more complex dynamics, and this model mismatch leads to degraded performance. 
In contrast, our controller explicitly accounts for the rope’s dynamic characteristics, resulting in faster and more stable control. 

When using a shorter rope, our method remains stable, whereas the baseline starts to oscillate after a certain point — even performing worse than the passive case.
This occurs because a shorter rope produces a larger swinging angle for the same top-end displacement, and such "overshooting" prevents convergence. 
Consequently, the baseline optimized for long ropes fails to generalize to shorter ones, while our method remains effective across different rope lengths.

For the knotted rope, the baseline achieves performance similar to our controller in the \textit{pos.~1} case, but it again exhibits oscillations in the \textit{pos.~2} case and takes longer to reach stability compared to the base-rope experiment.
Finally, for the thick rope, our controller still stabilizes the rope faster than both the baseline and the passive cases.

Our controller consistently achieves rope stabilization in the shortest time across all scenarios as reported in \cref{tab:settle time}.
Overall, our method demonstrates superior control performance and generalization capability across all scenarios.

\textbf{Robustness to external disturbances.}
To evaluate the robustness of SPiD, we manually disturb the base-rope at arbitrary points and directions.
Upon the application of external disturbances, the rope state rapidly transitions from a static condition to fast motion. The controller responds promptly and re-stabilizes the rope in a closed-loop manner.
\cref{fig:disturbance} shows key snapshots of the experiment.
From the energy profile, it can be observed that our controller rapidly dissipates injected energy and achieves fast re-stabilization of the system.

\subsection{Markerless rope stabilization}\label{sec:exp-markerless}

We evaluate our markerless rope perception method in three sections.
We start by presenting our preliminary evaluation of the pretrained segmentation models in Sec.~\ref{sec:exp-markerless-segmentation}. Then, we evaluate performance and limitations of the markerless perception method on our main task, firstly, by comparing it to the marker-based perception, and secondly, by running a rope stabilization control experiment without markers.

\subsubsection{Rope detection and segmentation} \label{sec:exp-markerless-segmentation} 

In this section we compare different detection and segmentation approaches for rope tracking, as introduced in Sec.~\ref{sec:markerless-method}.
We compare three approaches for this problem: \begin{inparaenum}[(1)]
    \item \textit{Grounded-SAM} combines Grounding DINO \cite{liu2024grounding} for detecting object bounding box (prompt: ``rope'') and SAM \cite{Kirillov_2023_ICCV} for pixel-wise segmentation;
    \item \textit{Grounded-Mobile-SAM} combines Grounding DINO with Mobile-SAM~\cite{zhang2023faster} that replaces SAM’s large image encoder with a lightweight encoder designed for faster inference;
    \item \textit{Fine-tuned YOLOv11-seg} \cite{ultralyticsyolo} detects the object and segments it through a unified model that is fine-tuned with our dataset.
\end{inparaenum}

\textbf{Data collection and fine-tuning. }A rope dataset was collected using an Intel RealSense D435i RGB-D camera. RGB images were captured at 60~FPS to cover a wide range of dynamic rope configurations.
The dataset includes four different ropes with varying colors (blue, green, white, and black), lengths, and thicknesses. Data were recorded under diverse conditions, including multiple backgrounds, illumination variations, and dynamic motions, to improve robustness and generalization.
The final dataset consists of 12{,}400 images for training and 2{,}200 images for validation and testing.
To better capture the thin ropes, a high resolution images ($960 \times 960$  px) were used during both training and inference.
Pixel-level annotations were initially generated by Grounded-SAM's
text-prompt-based detection and segmentation. 
These masks were then manually reviewed and refined 
using a custom graphical annotation interface and 
the CVAT\footnote{CVAT: Computer Vision Annotation Tool, https://www.cvat.ai} annotation tool.

The YOLOv11-seg model is fine-tuned for 100 epochs 
with cosine learning-rate decay and mild weight regularization to improve segmentation stability. 
During inference, each RGB frame is resized to the network input resolution and processed in real time on an NVIDIA RTX 2000 Ada Generation GPU with 8 GB of memory.

\textbf{Results. } Table~\ref{tab:quantitative_results} presents the precision, recall, and F1-score and average frames per second (FPS) of each approach on the same test split of the collected dataset 
 under identical hardware conditions.

\begin{table}[t]
\centering
\caption{Quantitative comparison of rope segmentation performance and runtime.}
\label{tab:quantitative_results}
\begin{tabular}{lcccc}
\hline
\textbf{Method} & \textbf{Precision} & \textbf{Recall} & \textbf{F1-score} & \textbf{FPS} \\
\hline
Grounded-SAM & \textbf{0.8703} & 0.8819 & \textbf{0.8743} & 0.41 \\
Grounded-Mobile-SAM & 0.6001 & 0.8727 & 0.6512 & 2.45 \\
Fine-tuned YOLOv11-seg & \textbf{0.8605} & \textbf{0.9434} & \textbf{0.8980} &\textbf{ 39.80} \\
\hline
\end{tabular}
\end{table}

The experimental results highlight the limitations of prompt-based foundation models for real-time rope perception and demonstrate the advantages of a task-specific, fine-tuned segmentation model.
Although Grounded-Mobile-SAM improves inference speed through a lightweight image encoder, the end-to-end latency is still dominated by GroundingDINO, resulting in insufficient frame rates for real-time deployment.
Furthermore, it achieves lower precision and F1-score due to reduced model capacity.
The fine-tuned YOLOv11-seg model overcomes these limitations by jointly performing detection and segmentation in a single forward pass. This unified architecture significantly reduces inference latency while maintaining high segmentation accuracy, with the additional cost of data collection and fine-tuning. 

\begin{figure}[t]
    \centering
    \begin{tabular}{c}
        \includegraphics[width=0.90\linewidth]{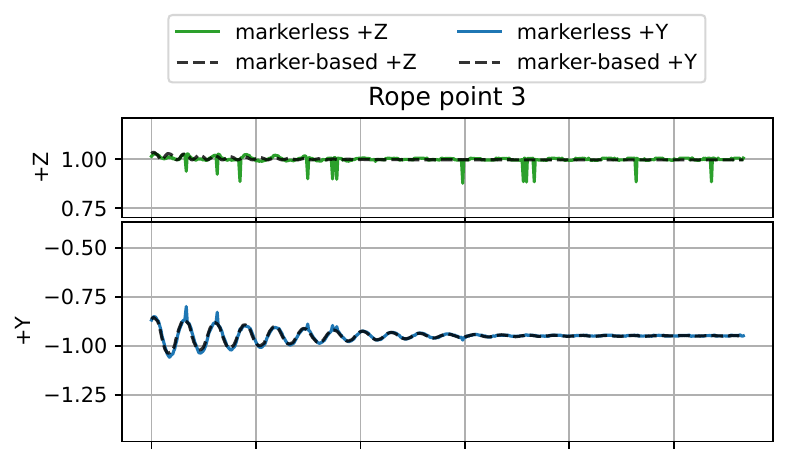} \vspace{-0.3cm}\\
        \includegraphics[width=0.90\linewidth]{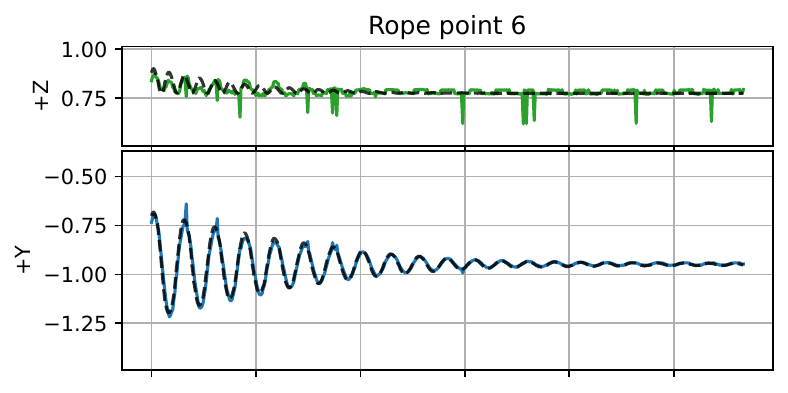} \vspace{-0.3cm}\\
        \includegraphics[width=0.90\linewidth]{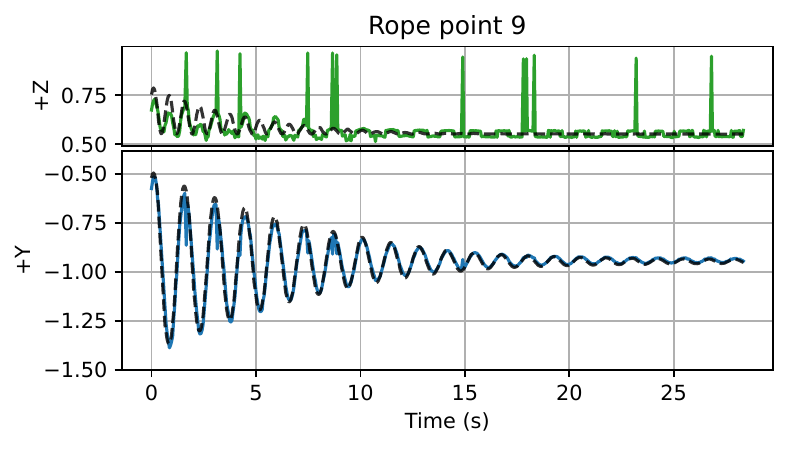} \vspace{-0.3cm}\\
    \end{tabular}
    \caption{Comparison of markerless and marker-based perception, shown for rope points 3, 6 and 9 on \textit{YZ-}axes.}
    \label{fig:markerless-accuracy}
    \vspace{-0.4cm}
\end{figure}

\subsubsection{Comparison to marker-based perception}

In this experiment, we let a rope swing for $28$ \textit{s} while tracking it by both the marker-based and markerless perception systems that are described in Section~\ref{sec:perception}.
We use the marker-based tracking positions as our reference to assess the accuracy of the markerless tracking method.
Marker-based tracking is based on OptiTrack system that uses multiple high-frequency cameras and reflective markers to achieve high accuracy tracking. 
Markerless tracking is done using a single Intel RealSense camera that is more affordable and portable, however, it produces less accurate results at a lower frequency.

The whole markerless perception pipeline, including the segmentation and point splitting, publishes point clouds at $22$ \textit{Hz} frequency on average, as compared to $100$ \textit{Hz} of marker-based perception. After orienting the reference frames by manual calibration and removing mean bias for each axis, the RMSE between the markerless and marker-based tracking over all points and \textit{YZ-}axes is $2.381$ \textit{cm}. We plot the tracked positions of 3 rope points in Fig.~\ref{fig:markerless-accuracy}. As seen on the figure, z-axis (altitude) contains more error. This stems from the momentary shifts that are caused by jumps in rope splitting. The y-axis shows higher accuracy, enabling the rope stabilization control.

Unlike the marker-based approach that is 3-D, the markerless system tracks rope segments in 2-D. We also tested 3-D markerless tracking by sampling the depth values for the segmented rope points as provided by the RGB-D camera. 
However, this data proved too noisy to be useful even with filtering efforts. 

\begin{figure}[t]
    \centering
    \includegraphics[width=0.90\linewidth]{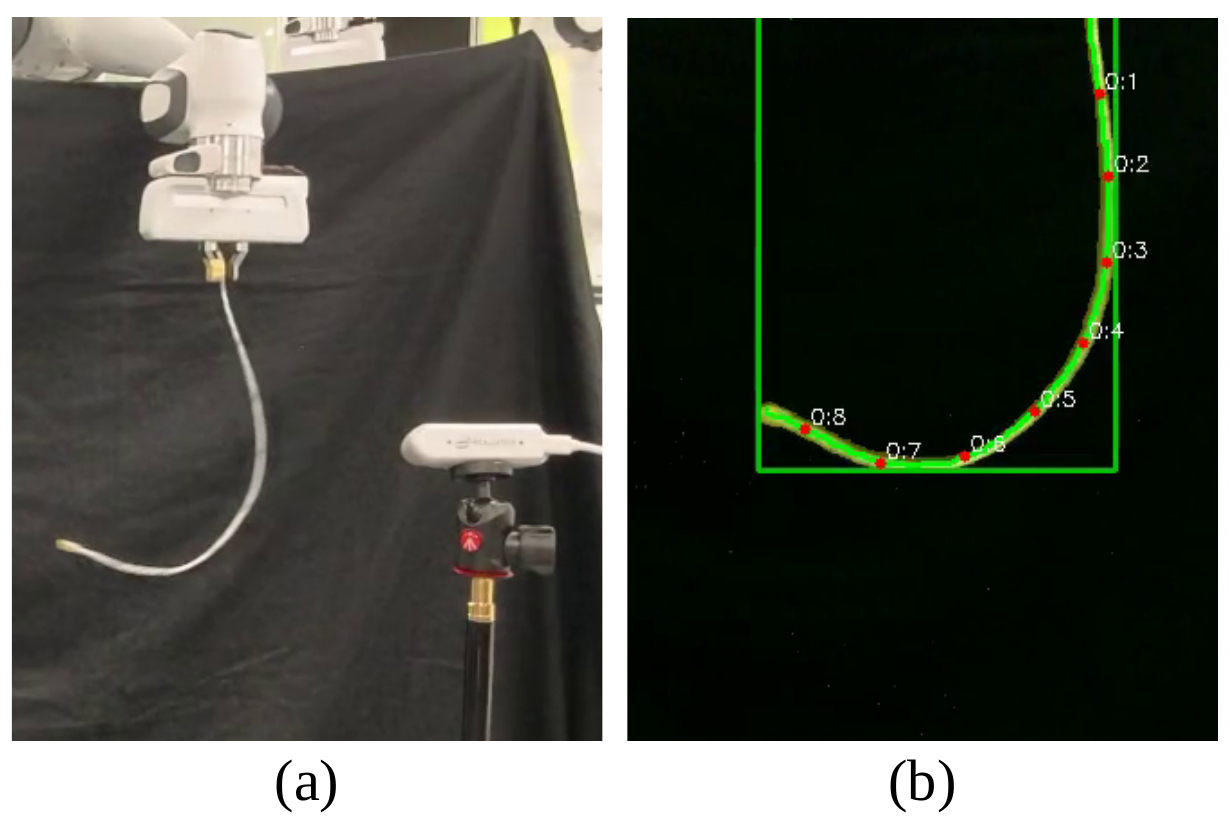}
    \vspace{-0.1cm}
    \caption{\textbf{Markerless rope stabilization.} (a) Experiment setup with a single RGB camera. (b) Annotated camera view with the detection box, segmentation mask and equally split nodes. }
    \label{fig:markerless-setup}
    \vspace{-0.4cm}
\end{figure}

\begin{figure}[t]
    \centering
    \includegraphics[width=1.0\linewidth]{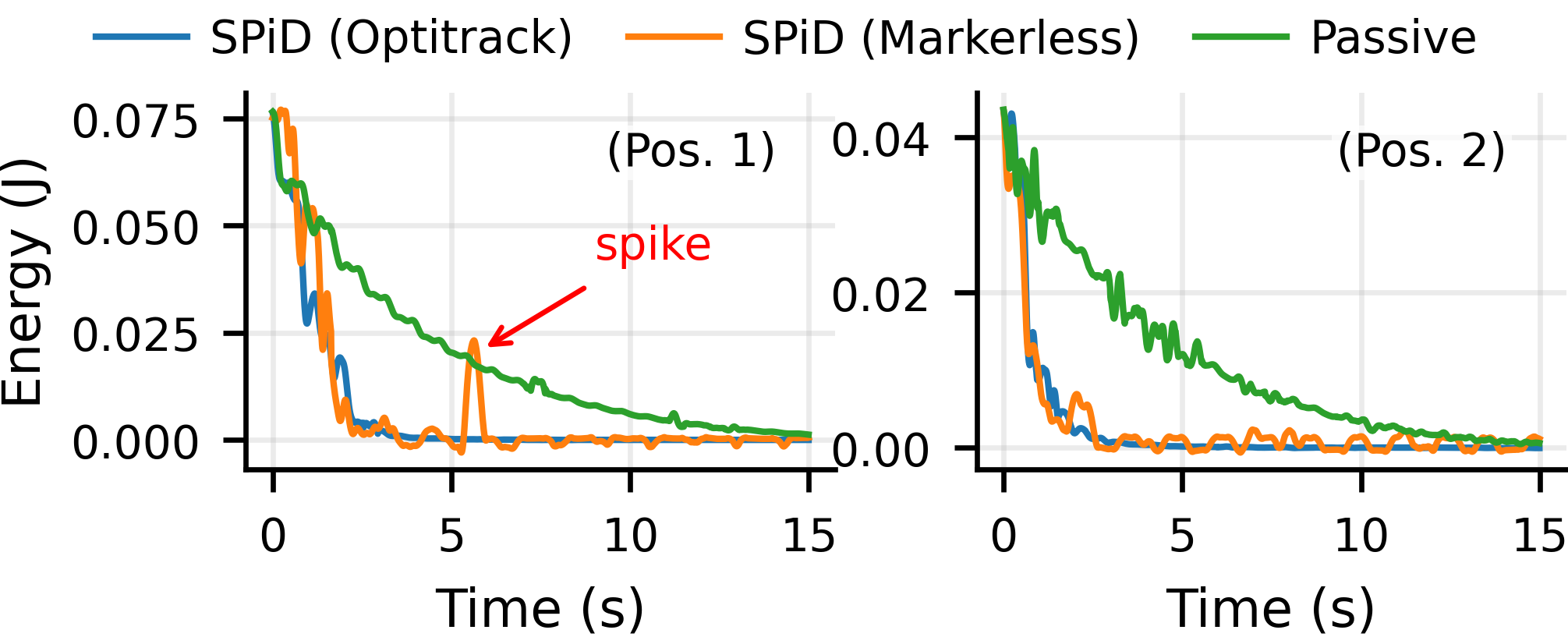}
    \caption{\textbf{Comparison of control performance under markerless perception.}  SPiD maintains similar performance under coarse markerless perception. The oscillations are due to the perception rather than real rope motion, including the annotated spike.}
    \label{fig:markerless-energy-plot}
    \vspace{-0.4cm}
\end{figure}

\subsubsection{Control experiment}
We validate the feasibility of our controller in a markerless single camera setup as shown in Fig.~\ref{fig:markerless-setup}. We used a simple background as the segmentation method can easily confuse parts of the background with a white rope. Another limitation of this approach is the field-of-view. As the camera goes farther than the rope, it becomes harder to segment a thin rope. On the other hand, getting closer constrains the area in which the rope can be observed. 

\begin{figure*}[h]
    \centering
    \includegraphics[width=1.0\textwidth]{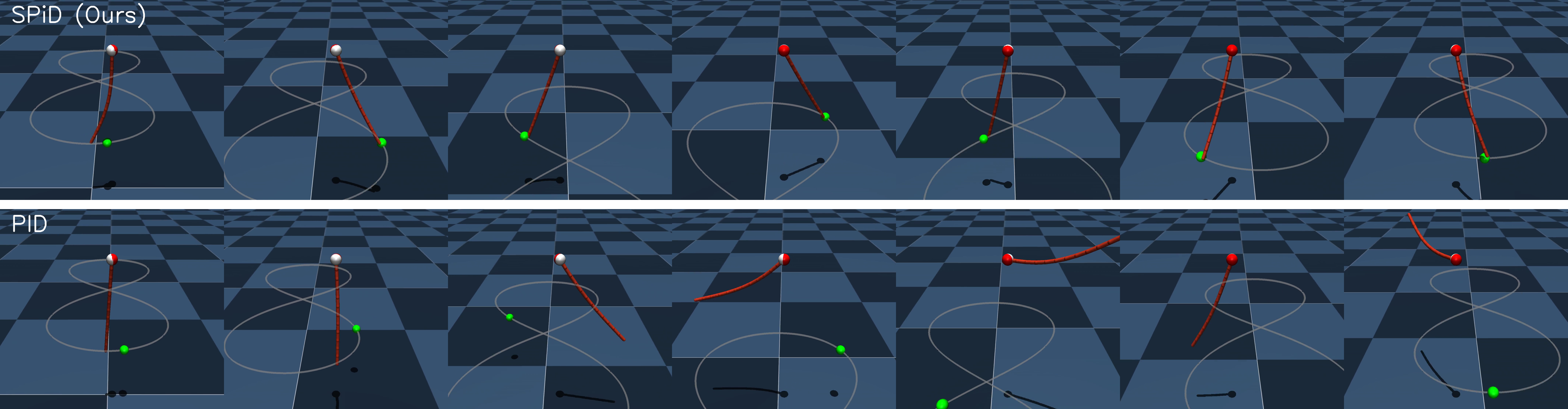}
    \vspace{-0.4cm}
    \caption{\textbf{Snapshots of rope trajectory tracking on the 2.5s Lemniscate.} Top: SPiD (ours); Bottom: PID baseline.}
    \label{fig:trajectory-snapshots}
\end{figure*}

\begin{table*}[t]
\centering
\caption{Mean and standard deviation of tracking error (in meters) over different target trajectories and durations.}
\label{tab:tracking_error}
\begin{tabular}{lcccccc}
\toprule
\multirow{2}{*}{Method} 
& \multicolumn{2}{c}{\textbf{Sinusoid}} 
& \multicolumn{2}{c}{\textbf{Egg}} 
& \multicolumn{2}{c}{\textbf{Lemniscate}} \\
\cmidrule(lr){2-3}
\cmidrule(lr){4-5}
\cmidrule(lr){6-7}
& $T=2.5\,\mathrm{s}$ & $T=5.0\,\mathrm{s}$ 
& $T=2.5\,\mathrm{s}$ & $T=5.0\,\mathrm{s}$ 
& $T=2.5\,\mathrm{s}$ & $T=5.0\,\mathrm{s}$ \\
\midrule
PID  
& $0.1617 \pm 0.0006$ & $0.0398 \pm 0.0014$
& $0.1470 \pm 0.0023$ & $0.0392 \pm 0.0012$
& $0.3082 \pm 0.0005$ & $0.0984 \pm 0.0020$ \\

SPiD (Ours) 
& $\mathbf{0.0299 \pm 0.0004}$ & $\mathbf{0.0134 \pm 0.0003}$
& $\mathbf{0.0275 \pm 0.0004}$ & $\mathbf{0.0109 \pm 0.0002}$
& $\mathbf{0.0221 \pm 0.0005}$ & $\mathbf{0.0127 \pm 0.0001}$ \\
\bottomrule
\end{tabular}
\end{table*}

\begin{figure}[!h]
    \centering
    \includegraphics[width=0.90\linewidth]{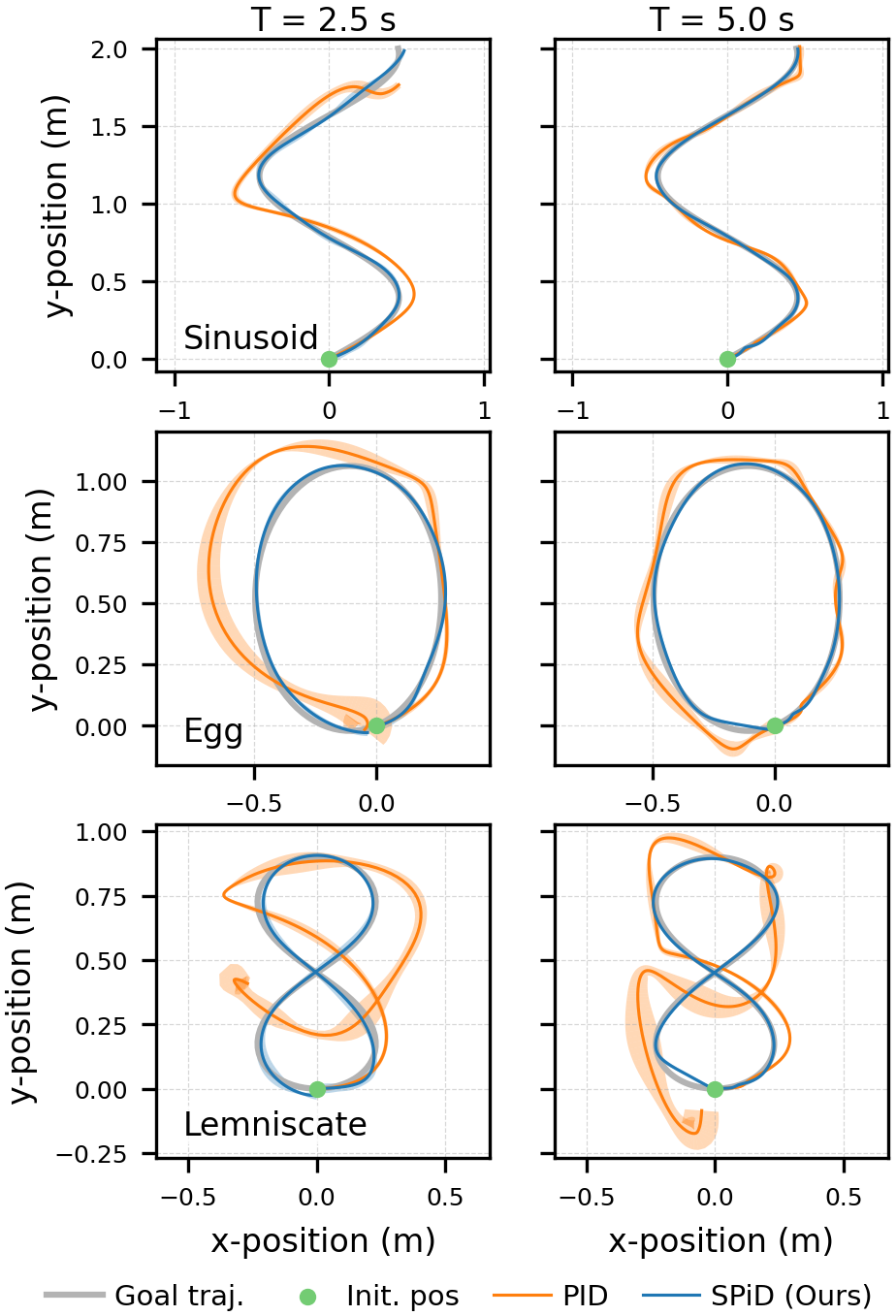}
    \caption{\textbf{Trajectory tracking results.} Three target trajectories under two different durations are evaluated. For each setting, results are averaged over $5^3$ rope models with different physical parameters, with solid lines showing the mean tracked trajectories and shaded regions indicating $\pm 3$ std measured orthogonally to the mean trajectory. Our method achieves more accurate tracking compared to the baseline.}
    \label{fig:trajectory-plots}
    \vspace{-0.4cm}
\end{figure}

We conducted the stabilization experiments with \textit{short-rope} for two initial positions (Fig.~\ref{fig:comparison}). We present the energy plot of \textit{markerless SPiD} in Fig.~\ref{fig:markerless-energy-plot}, comparing it to the results obtained in the previous experiment.
Please note that the remaining spikes in the energy plot are due to the spikes in perception output as seen in Fig.~\ref{fig:markerless-accuracy}, rather than rope motion.
Despite the limitations of the perception system, we were able to achieve equally swift rope stabilization without fine-tuning our controller. 
This is made possible by the robustness of our controller and a fairly accurate rope perception method.

\subsection{Generalization to rope trajectory tracking}\label{sec:exp-trajectory}

To demonstrate the generality of SPiD, we apply the same approach to a new task in simulation. The rope trajectory tracking task requires manipulating the top end of the rope such that its bottom end follows a desired trajectory.
Compared to the rope stabilization task, only two aspects need to be reformulated:
\begin{itemize}
    \item The loss function is changed from the rope energy to the distance between the rope tip trajectory and the target trajectory.
    \item A short segment of the future target trajectory is included in the controller input, so that it learns to act in a predictive manner.
\end{itemize}

\subsubsection{Task formulation}\hfill\break\noindent
\indent\textbf{Task specification.} In this task, the task specification $\bm{G}$ includes the target trajectory, because this information is necessary to define the loss function and achieve predictive behavior. The target trajectory is represented as $\bm{G}=\{ \bm{g}_{\text{goal}}^{t}\}_{t=1...T}$, where $\bm{g_{\text{goal}}}^t \in \mathbb{R}^3$ denotes the target position at time step $t$. Instead of feeding the entire trajectory $\bm{G}$ into the controller, we concatenate the current rope state with the current target point and the next 40 target points.

\indent\textbf{Loss function.}
Given the initial state $\bm{X}^0$, the rope evolves over time under the action of the controller, producing a sequence of states $\{\bm{X}^t\}_{t=1...T}$. 
In the trajectory tracking task, we only require the rope tip to stably track the target trajectory, without considering the trajectories of the other rope nodes. Therefore, we extract the position of the rope tip $\bm{p}_N^t$ from $\bm{X}^t$ and define the tracking loss based on its distance to the target position $\bm{g_{\text{goal}}}^t$ as
\begin{equation}
\begin{split}
\bm{L_\text{task}}\!\left(\{\bm{X}^t\}_{t=1}^{T}, \bm{G} \right)
= \sum_{t=1}^{T} \gamma^{T-t} \cdot \left\| \bm{p}_N^t - \bm{g_{\text{goal}}}^t \right\|^2_2,
\end{split}
\label{eq:loss_track}
\end{equation}
where $\gamma \in (0,1]$ is a discount factor that places greater emphasis on tracking accuracy at later time steps.

\textbf{Metrics.}
We evaluate the tracking performance by the average Euclidean distance between the executed trajectory and the target trajectory.
Specifically, the tracking error is defined as
\begin{equation}
e_{\text{track}} = \frac{1}{T} \sum_{t=1}^{T} \left\| \bm{p}_N^t - \bm{g_{\text{goal}}}^t \right\|_2 ,
\end{equation}
where a smaller value indicates better tracking accuracy.

\subsubsection{Baseline controller}
For comparison, we adopt a PID controller as the baseline.
The control error is defined as the positional deviation between the rope tip and the target point.
For each target trajectory and each trajectory duration, a separate set of PID gains is independently tuned, resulting in a total of six parameter sets.
In contrast, the SPiD employs a single fixed policy across all trajectories and durations.

\subsubsection{Simulation experiments}\hfill\break\noindent
\indent\textbf{Setup.} The simulation experiments are conducted in MuJoCo using a $30\,\mathrm{cm}$ rope model composed of 12 serially connected rigid segments.
We reuse the previously identified physics model to train the controller, while reducing the number of mass points $N$ to 7.
The rope is manipulated by controlling the velocity of its top end in the 3-D space at a control frequency of $100\,\mathrm{Hz}$.
We evaluate the performance of the proposed method on three different target trajectories, namely Sinusoid, Egg, and Lemniscate trajectories, as illustrated in \cref{fig:trajectory-plots}.
The total path lengths of the three target trajectories are $3.13\,\mathrm{m}$, $2.92\,\mathrm{m}$, and $2.77\,\mathrm{m}$, respectively.

To examine tracking performance under different motion speeds, we consider two trajectory durations, $2.5\,\mathrm{s}$ and $5.0\,\mathrm{s}$.
For the shorter duration of $2.5\,\mathrm{s}$, the maximum velocities along the sinusoid, egg-shaped, and lemniscate trajectories reach $1.98\,\mathrm{m/s}$, $1.79\,\mathrm{m/s}$, and $1.47\,\mathrm{m/s}$, respectively. Furthermore, following the same setup as in the rope stabilization task, we sample $5^3$ rope models with different physical properties to evaluate the generalization aspect.

\textbf{Results.}
The tracking results are shown in \cref{fig:trajectory-plots}.
Our method consistently achieves accurate tracking performance across different trajectories, time durations, and rope properties. 
In contrast, the baseline method only achieves reasonable performance on slow ($T=5.0$) and simple trajectories (e.g., the sinusoid-egg case), fails to accurately track the Lemniscate trajectory, and exhibits significantly larger tracking variance across different rope properties in all cases.
\cref{fig:trajectory-snapshots} further presents representative tracking snapshots for the $2.5\,\mathrm{s}$ Lemniscate trajectory. 
Finally, the quantitative results are summarized in \cref{tab:tracking_error}, where our method achieves substantially lower mean error and standard deviation in all cases compared with the baseline method.

\section{CONCLUSIONS}

This paper proposed SPiD, a physics-informed self-supervised learning framework for dynamic manipulation of deformable linear objects.
The effectiveness, generalizability, and robustness of SPiD were demonstrated on an online rope stabilization task through extensive experiments in both simulation and the real world, even under coarse markerless perception.
Furthermore, the same framework was extended to a trajectory tracking task with only minimal modifications.
Across both tasks, the proposed method achieved superior performance over baseline approaches through improved physical modeling and the augmented self-supervised training strategy.
Our dynamics model demonstrated increased accuracy in rope motion prediction with the contribution of the newly introduced damping terms.
Furthermore, it enabled the real-time rope manipulation with a neural network controller.
We were able to train a highly robust and generalizable controller with limited data without expert demonstrations, owing to the proposed augmented self-supervised learning strategy, including a novel self-supervised DAgger variant. 

However, our work is not without limitations. The current model is applied to objects with a one-dimensional topology, such as ropes, and has not yet been validated on higher-dimensional deformable objects. While we expect the approach to scale in a straightforward manner, future work will focus on extending the model to two- and three-dimensional deformable objects, such as cloth or sponge-like materials, and evaluating the proposed approach on additional challenging tasks.

\addtolength{\textheight}{-5.0cm}   






\bibliographystyle{IEEEtran}
\bibliography{ref}   

\end{document}